\definecolor{mydarkblue}{rgb}{0,0.08,0.45}
\definecolor{mydarkred}{rgb}{0.6,0,0}
\definecolor{myblue}{HTML}{268BD2}
\definecolor{mygreen}{HTML}{658354}
\definecolor{orangeinplot}{HTML}{e29c7a}
\definecolor{purpleinplot}{HTML}{7676a4}
\definecolor{greeninplot}{HTML}{288308}
\newcommand{\cmark}{\ding{51}} 
\newcommand{\xmark}{\ding{55}} 
\newcommand{\model}{\textsc{ARS}\xspace}
\newcommand{\ba}{\bm{a}}
\newcommand{\bx}{\bm{x}}
\newcommand{\br}{\bm{r}}
\newcommand{\bh}{\bm{h}}
\newcommand{\bz}{\bm{z}}
\newcommand{\bu}{\bm{u}}
\newcommand{\bb}{\bm{b}}
\theoremstyle{plain}
\newtheorem{theorem}{Theorem}[section]
\newtheorem{proposition}[theorem]{Proposition}
\newtheorem{lemma}[theorem]{Lemma}
\theoremstyle{definition}
\newtheorem{definition}[theorem]{Definition}
\newtheorem{assumption}[theorem]{Assumption}
\theoremstyle{remark}
\icmltitlerunning{Harnessing Reasoning Trajectories for Hallucination Detection via Answer-agreement Representation Shaping}
\begin{document}

\twocolumn[
  \icmltitle{Harnessing Reasoning Trajectories for Hallucination Detection via Answer-agreement Representation Shaping}



  \icmlsetsymbol{equal}{*}

  \begin{icmlauthorlist}
    \icmlauthor{Jianxiong Zhang}{ntu,scu}
    \icmlauthor{Bing Guo}{scu}
    \icmlauthor{Yuming Jiang}{scu}
    \icmlauthor{Haobo Wang}{zju}
    \icmlauthor{Bo An}{ntu}
    \icmlauthor{Sean Du}{ntu}

  \end{icmlauthorlist}

  \icmlaffiliation{scu}{College of Computer Science, Sichuan University, China}
   \icmlaffiliation{zju}{School of Software Technology,
Zhejiang University, China}
  \icmlaffiliation{ntu}{College of Computing and Data Science, Nanyang Technological University, Singapore}

  \icmlcorrespondingauthor{Sean Du}{xuefeng.du@ntu.edu.sg}

  \icmlkeywords{Machine Learning, ICML}

  \vskip 0.3in
]



\printAffiliationsAndNotice{}  

\begin{abstract}
Large reasoning models (LRMs) often generate long, seemingly coherent reasoning traces yet still produce incorrect answers, making hallucination detection challenging. Although trajectories contain useful signals, directly using trace text or vanilla hidden states for detection is brittle: traces vary in form and detectors can overfit to superficial patterns rather than answer validity. We introduce Answer-agreement Representation Shaping (\model), which learns detection-friendly trace-conditioned representations by explicitly encoding answer stability. \model generates counterfactual answers through small latent interventions, specifically, perturbing the trace-boundary embedding, and labels each perturbation by whether the resulting answer agrees with the original. It then learns representations that bring answer-agreeing states together and separate answer-disagreeing ones, exposing latent instability indicative of hallucination risk. The shaped embeddings are plug-and-play with existing embedding-based detectors and require no human annotations during training. Experiments demonstrate that \model consistently improves detection and achieves substantial gains over strong baselines.  Code is available at: \url{https://github.com/radiolab-ntu/ars_icml2026}.
\end{abstract}

\section{Introduction}
Language models are increasingly deployed as reasoning-centric systems: they generate intermediate reasoning traces and then produce a final answer in domains such as multi-hop QA~\cite{chen2024llm}, math~\cite{huan2025does}, and tool-augmented decision making~\cite{qin2024toolllm}. Despite rapid progress, a persistent reliability failure is that models can generate answers that are fluent and seemingly well-justified, yet factually incorrect, which are commonly referred to as hallucinations~\cite{huang2025survey}. This problem is amplified for large reasoning models (LRMs)~\cite{hou2025t1advancinglanguagemodel,deepseekai2025deepseekr1incentivizingreasoningcapability,yang2025qwen3technicalreport}: a single unsupported intermediate step can propagate through a long trajectory and culminate in a confident but wrong answer, while the surface form of the reasoning trace can remain persuasive~\cite{yao2025reasoning}.

\begin{figure}[t]
  \begin{center}
    \centerline{\includegraphics[width=\columnwidth]{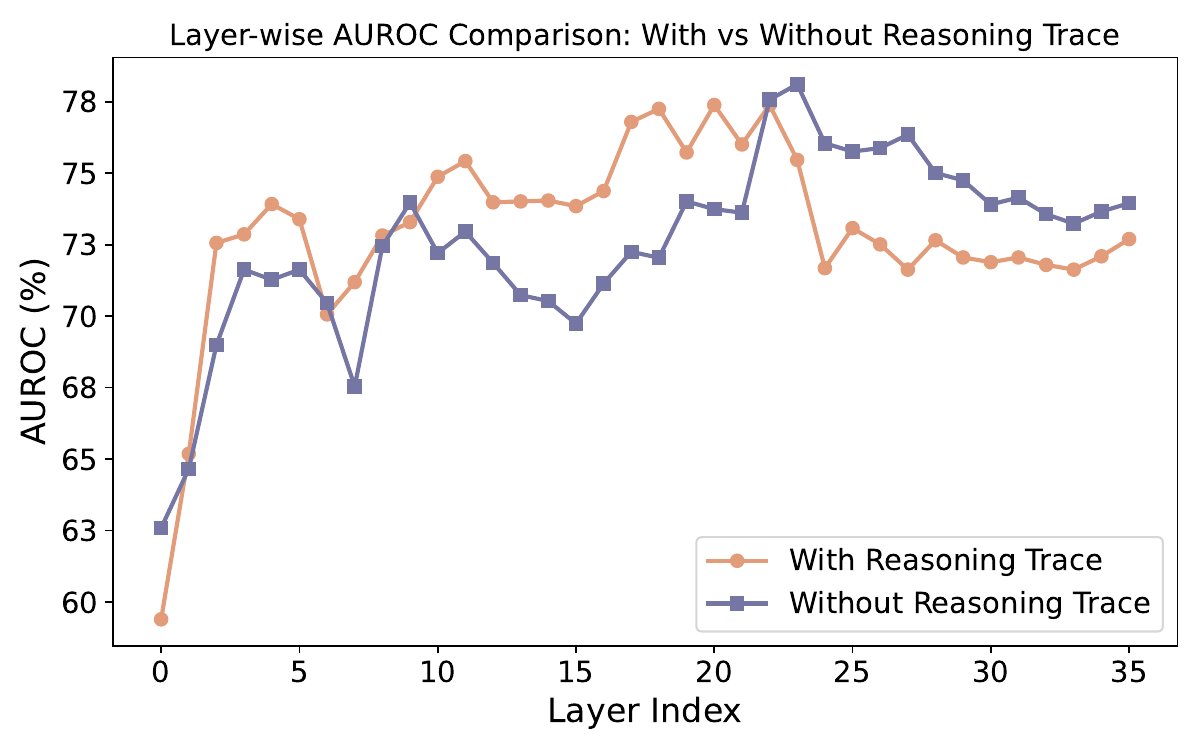}}
    \caption{\small 
   {\textbf{Effect of reasoning trajectories on hallucination detection in LRMs}.} We compare  detection performance for the same LRM (Qwen3-8B~\cite{yang2025qwen3technicalreport}) with and without an explicit reasoning trajectory, using representations extracted from each layer for the same answers. Consistent with our hypothesis, reasoning traces can sometimes obscure answer-level hallucination signals. The dataset is TruthfulQA~\cite{lin2022truthfulqa}.
    }
    \label{linear-probe}
  \end{center}
\end{figure}

A natural direction is to use the reasoning trajectory as a richer signal for hallucination detection. However, leveraging trajectories in practice is non-trivial. First, reasoning traces are not uniquely determined: the same prompt can admit multiple plausible traces, and LRMs may vary intermediate steps while keeping the \emph{answer} unchanged. Second, hallucination is ultimately an \emph{answer-level} property, yet trajectories span many tokens and layers, where irrelevant stylistic variation can dominate representation-based scores. As a result, naively probing hidden states along the full trace can be brittle: the detector may overfit to superficial trace patterns, or miss cases where the model is internally “close” to changing the answer. Consistent with this, Figure~\ref{linear-probe} shows that straightforward probing yields mixed, and sometimes worse detection performance when reasoning trajectories are included. These observations suggest that the key is not merely to \emph{use} trajectories, but to \emph{distill} from them an answer-centric signal that is stable and detection-friendly.

In this work, we investigate an important yet underexplored research question:
\begin{center}
   \textbf{\textit{Can we leverage the reasoning trajectory  to shape detection-friendly answer representations?}}
\end{center}

To address this, we introduce \textbf{A}nswer-agreement \textbf{R}epresentation \textbf{S}haping (\model), a novel learning framework that optimizes \emph{trace-conditioned answer embeddings} explicitly organized by \emph{answer agreement} under small internal interventions. The key idea in \model is to generate counterfactual answers by applying small perturbations directly to the model's hidden state \emph{at the trace boundary}, which is the last-token embedding of the reasoning trace at the penultimate layer. We then continue decoding to obtain an alternative final answer. Intuitively, if the model's internal state supports a truthful answer with a large margin, small perturbations should rarely change the final answer; conversely, hallucinated answers are often supported by fragile internal states, where small perturbations can redirect decoding toward inconsistent answers.

Crucially, rather than constructing the shaping signal by editing text which  requires careful perturbation design, \model uses latent-state interventions to obtain paired examples tied to the model’s own decision geometry. Starting from a given prompt and reasoning trace, we perturb the penultimate-layer state at the trace boundary and decode a counterfactual answer. We form {positive} pairs when the answer agrees with the original answer, and {negative} pairs when it disagrees. Using these automatically constructed pairs, \model optimizes a lightweight mapping on answer representations that pulls agreement pairs together and pushes disagreement pairs apart, yielding embeddings that more directly reflect answer stability.

Once trained, \model produces a shaped {trace-conditioned answer embedding} that can be scored by a range of embedding-based detectors, such as supervised and unsupervised probing~\cite{azaria2023internal,burns2022discovering}, subspace scoring~\cite{du2024haloscope}, and eigen-based scores~\cite{chen2024inside}, without requiring expensive sampling at test time. Specifically, on a representative benchmark TruthfulQA, our learned trace-conditioned answer embeddings improve detection performance by 19.79\% compared to the vanilla LRM embeddings, and achieve state-of-the-art detection performance of 86.64\%.

\begin{figure*}[t]
  \begin{center}
    \centerline{\includegraphics[width=\textwidth]{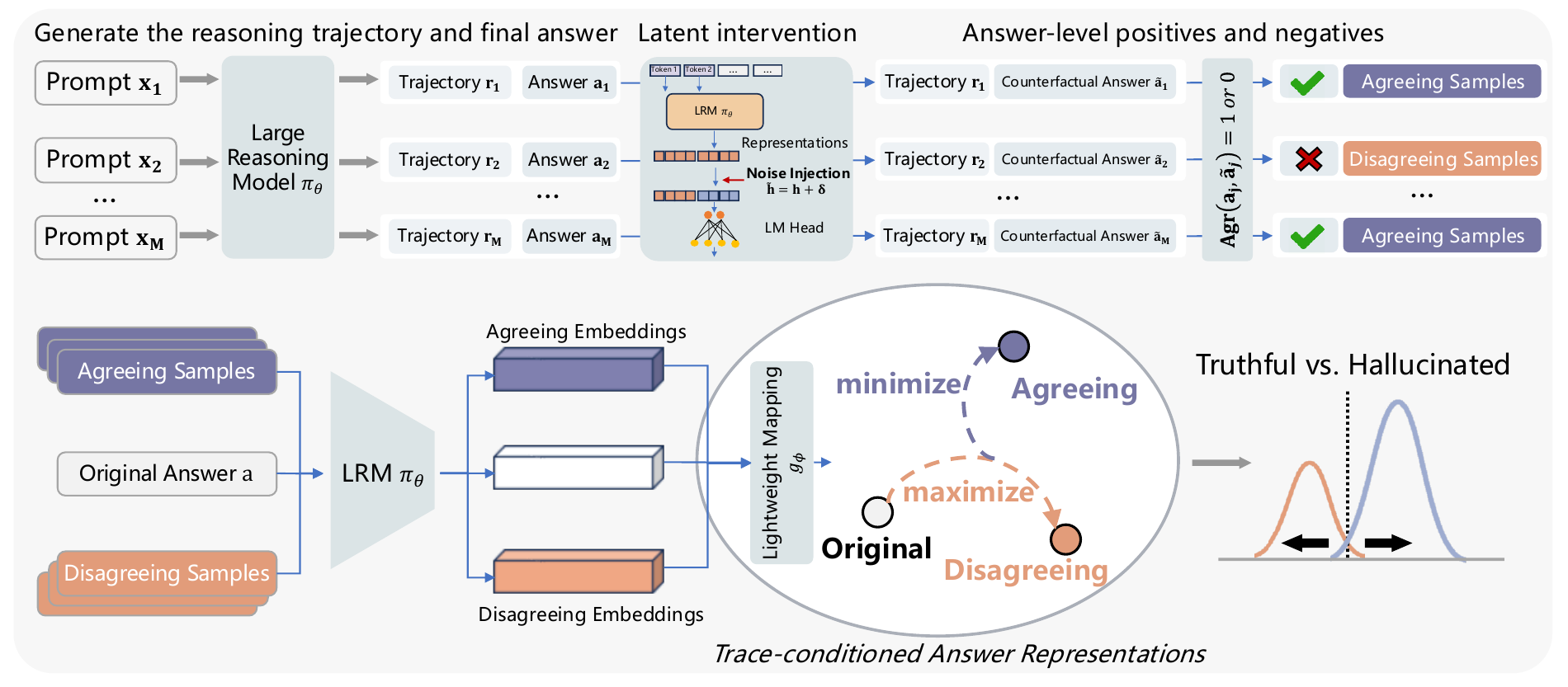}}
    \caption{\small 
    \textbf{Overview of \model framework for hallucination detection in LRMs}. \model firstly generates counterfactual answers by latent intervention at the trace boundary, and then learns a lightweight mapping that shapes trace-conditioned answer representations with an answer-agreement signal. This can make truthful vs. hallucinated outputs more separable for downstream embedding-based detectors. 
    }
    \label{main-figure}
  \end{center}
\end{figure*}

Our key contributions are summarized as follows:
\begin{itemize}
    \item To the best of our knowledge, \model is the first framework that enables principled use of reasoning traces to {shape answer representations} for hallucination detection in LRMs, by inducing counterfactual answers via latent perturbations at the end of the reasoning trace.
    \item We propose an agreement-driven representation shaping objective that organizes {final answer embeddings} by whether small internal changes preserve the answer, avoiding reliance on brittle text perturbations.
    \item Comprehensive  empirical (Section~\ref{scr:experiment}) and theoretical (Proposition~\ref{prop:agree_bound_main}) analyses are provided to understand when and why \model improves detection. The results provide insights into hallucination detection for frontier reasoning AI systems.  
\end{itemize}

\section{Related Work}
\paragraph{Hallucination detection} has attracted a surge of interest in recent
years~\cite{chen2023hallucination,sriramanan2024llmcheck,su2024unsupervised,zhang2025siren,zhou2025hademif}.  One line of work performs  detection
by devising post-hoc scoring functions, including logit-based
methods that utilize token-level probability as an uncertainty
score~\cite{malinin2021uncertainty,ren2023self,duan2024shifting}; consistency-based detectors that  assess uncertainty by
evaluating the consistency across multiple responses~\cite{kuhn2023semantic,manakul2023selfcheckgpt,chen2024inside,mundler2023self}; verbalized methods that prompt LLMs to express their uncertainty in human language~\cite{lin2022teaching,xiong2023can}; and representation-based approaches that extract hallucination signals from LLM embeddings~\cite{du2024haloscope}. Another line of work addressed
detection by training a classifier~\cite{azaria2023internal,burns2022discovering,kuhn2023semantic,park2025steer}, which usually requires additional human annotation.

While effective for standard LLMs, these approaches may not be well suited for reasoning models, which typically generate long-horizon thinking trajectories before the final answer.~\citet{cheng2025chain} discovered that reasoning can obscure hallucination  signals across different uncertainty estimation methods. The most relevant works that tackle hallucination detection for LRMs are~\cite{sun2025detection,anonymous2025unraveling,wang2025joint,lu2026streaminghallucinationdetectionlong}. None of them shape the LRM representations to improve detection and the comparison with  \model is in  Section~\ref{sec:mainresult}.

\paragraph{Large reasoning models} build upon the chain-of-thought paradigm~\cite{wei2022chain}, and generate explicit multi-step reasoning traces often via training recipes that incentivize intermediate thinking or long-horizon problem solving~\cite{deepseekai2025deepseekr1incentivizingreasoningcapability,yang2025qwen3technicalreport,hou2025t1advancinglanguagemodel}. This paradigm has improved downstream task accuracy, but it also introduces new reliability challenges of model hallucination~\cite{yao2025reasoning}.  There are several works that study the intermediate reasoning trajectory by step-wise checking and process-level supervision~\cite{cobbe2021gsm8k,lightman2023let}.  Our work is complementary: rather than judging the textual validity of each step, \model treats the trace primarily as a \emph{conditioning context} and shapes the trace-conditioned answer
embeddings for hallucination detection.

\section{Problem Setup}
\label{sec:method-setup}
Formally, we describe the reasoning model generation and the problem of hallucination detection.

\paragraph{LLM generation with reasoning trajectories.}
We consider an $L+1$-layer causal language model $\pi_\theta$ that, given a prompt
$\bx=\{x_1,\ldots,x_n\}$, generates an output sequence autoregressively.
In reasoning-centric settings, the output is naturally decomposed into a
{reasoning trajectory} $\br=\{x_{n+1},\ldots,x_{n+t}\}$ followed by a
{final answer} $\ba=\{x_{n+t+1},\ldots,x_{n+m}\}$, where $t<m$ may vary across examples.
At each decoding step $i>n$, the model defines a distribution over the vocabulary
$\mathcal{V}$ conditioned on the prefix $\{x_1,\ldots,x_{i-1}\}$:
\begin{equation}
    \pi_\theta(x_i \mid x_{<i})
    = \mathrm{softmax}\!\big(\mathbf{W}_o\, \bh_L(x_{<i}) + \bb_o\big),
\end{equation}
where $\bh_L(x_{<i})\in\mathbb{R}^d$ denotes the penultimate-layer hidden representation
associated with the next-token prediction, and $(\mathbf{W}_o,\bb_o)$ are the final-layer parameters ($L+1$-th layer).
For simplicity we assume greedy decoding in exposition, i.e.,
$x_i=\arg\max_{x\in\mathcal{V}} \pi_\theta(x\mid x_{<i})$, though our method applies to standard decoding variants.

\paragraph{Hallucination detection.}
Given a prompt and a model generation $(\bx,\br,\ba)$,
the goal of hallucination detection is to predict whether the final answer $\ba$
is truthful  under the task-specific criterion.
We write $y\in\{0,1\}$ for the (unknown) truthfulness label of $\ba$,
and seek a detector $G$ that maps the prompt, trajectory, and answer to a binary prediction:
\begin{equation}
    G(\bx,\br,\ba) \in \{0,1\},
\end{equation}
where $G(\bx,\br,\ba)=1$ indicates a truthful answer and $0$ indicates a hallucination.
Equivalently, one may view $G$ as producing a real-valued score
$s(\bx,\br,\ba)$ that is thresholded to obtain a decision.

\paragraph{Key challenge.}
Reasoning trajectories provide additional signal beyond the final answer, but they also exhibit substantial
surface-form variability. As a result, detectors that operate directly on trace text or vanilla hidden states
can be brittle: they may overfit to stylistic trace artifacts instead of features tied to whether the answer is stable and correct.
Our approach addresses this by shaping the LRM  representation that emphasizes \emph{answer agreement} under small latent perturbations,
which we introduce next.

\section{Proposed Framework: \model}
\label{sec:method}

In this paper, we propose a novel framework that enables {principled use of reasoning trajectories} for hallucination detection by \emph{shaping the trace-conditioned answer embeddings} with an \emph{answer-agreement} signal induced by small latent interventions. The central goal is to transform the vanilla LRM hidden state, which is often dominated by surface-form variability, into a detection-friendly representation that tracks whether the final answer is \emph{stable} under minor internal changes.

Our key motivation is that hallucinated answers are often supported by {instability}~\cite{chen2024inside}: small variations in the decoding process can lead to inconsistent answers. However, existing works usually realize this signal through {output-space} stochasticity, e.g., drawing multiple samples and measuring disagreement or entropy~\cite{kuhn2023semantic}, which incurs substantial test-time overhead and can be brittle to prompt format and paraphrasing. \model instead makes instability {explicit} in the shaped embeddings, ensuring that downstream hallucination detectors can more reliably perform hallucination detection.

\paragraph{Overview.} Given a prompt $\bx$, a reasoning trajectory $\br$, and a final answer $\ba$ produced by a fixed reasoning model $\pi_\theta$, \model learns a lightweight mapping
$g_\phi:\mathbb{R}^d\rightarrow\mathbb{R}^k$ that converts the vanilla answer representation $\bu$ into a shaped embedding $\bz$ used for downstream hallucination detection.
Our framework consists of two integral components:
\begin{itemize}
    \item \textit{Latent intervention for counterfactual answers.} We perturb the model's latent state \emph{at the trace boundary}—the last-token embedding of $\br$ at the penultimate layer and continue decoding to obtain counterfactual answers.
    \item \textit{Answer-agreement representation shaping.} We label each counterfactual by whether its answer \emph{agrees} with the original, and optimize $g_\phi$ so that agreeing states are mapped closer than disagreeing states, yielding embeddings that expose latent answer instability.
\end{itemize}
Notably, \model does \emph{not} require hallucination labels and updating the base LRM parameters $\theta$; it only trains a small head $g_\phi$, making it  lightweight and easy to integrate.

\subsection{Latent Intervention for Counterfactual Answers}
\label{sec:method1}
A key design choice is to intervene at a compact summary of the entire reasoning trajectory. Let
$\bx\!\oplus\!\br=\{x_1,\ldots,x_{n+t}\}$ be the prefix ending at the last token of the trace.
We denote the corresponding \emph{trace-boundary representation} as $ \bh \;=\;  \bh_L(\bx\!\oplus\!\br) \in \mathbb{R}^d,$ where $ \bh_L(\cdot)$ is the model representation at the penultimate layer (as in Section~\ref{sec:method-setup}).
Intuitively, $\bh$ captures the model's internal state {right before} answer generation begins, and thus provides a natural control point for probing answer stability.

We generate counterfactual answers by applying small perturbations to $\bh$ and resuming decoding.
Concretely, we draw a perturbation vector $\boldsymbol{\delta}\sim \mathcal{D}$ and construct
$\tilde{\bh} \;=\; \bh +  \boldsymbol{\delta},$ where $\mathcal{D}$ is a simple distribution (e.g., isotropic Gaussian noise $\mathcal{N}(0, \sigma^2\boldsymbol{I})$).
We then continue autoregressive decoding {starting from} the intervened boundary state to obtain a counterfactual answer $\tilde{\ba}$ by $ \tilde{\ba} \;=\; \mathrm{Decode}_\theta(\bx\!\oplus\!\br;\,\tilde{\bh}).$ In practice, for each $(\bx,\br,\ba)$ we sample $M$ perturbations to obtain a set of counterfactual answers
$\{\tilde{\ba}_j\}_{j=1}^M$.

\paragraph{Why intervene at the trace boundary?} Intervening in the mid-trace is suboptimal because the model has not yet committed to a concrete answer, so a small perturbation can arbitrarily reshape subsequent reasoning, mixing answer-relevant effects with large, noisy changes in trace form. Additionally, intervening after answer decoding is also less informative: once decoding has started, later tokens are heavily constrained by earlier answer tokens, so perturbations often cause only superficial edits rather than clean answer flips. The trace boundary is precisely where the model’s internal state has incorporated the full reasoning  trajectory while still having maximal freedom to determine the answer, making answer agreement under small perturbations a sharp and interpretable stability signal. Detailed empirical verification is provided in Section~\ref{sec:ablationstudy}.

\subsection{Answer-agreement Representation Shaping}
\label{sec:method2}
We now describe how \model converts the counterfactual answers induced by latent interventions (Section~\ref{sec:method1}) into a training signal that {shapes} the detection-friendly trace-conditioned answer representation.

\paragraph{Answer agreement as supervision.}
Given the original answer $\ba$ and a counterfactual answer $\tilde{\ba}$, we define an \emph{answer-agreement} indicator $ \mathrm{Agr}(\ba, \tilde{\ba}) \in \{0,1\},$ which returns $1$ if $\tilde{\ba}$ is considered equivalent to $\ba$ and $0$ otherwise~\footnote{$\mathrm{Agr}(\cdot,\cdot)$ can  be practically instantiated via textual similarity metrics or LRM judge, thus requiring no gt. labels $y$. }.

\paragraph{Answer-level positives and negatives.}
For each generation $(\bx, \br, \ba)$, Section~\ref{sec:method1} yields $M$  counterfactual answers $\{\tilde{\ba}_j\}_{j=1}^M$.
\model forms positives and negatives at the \emph{answer-representation} level.
Let $\tilde{\bu}$ denote the vanilla {answer embedding} extracted from the frozen LRM for trajectory and answer $(\bx \oplus \br, \tilde{\ba})$, we partition counterfactual \emph{answer embeddings} into agreement \& disagreement sets:
\begin{align}
\label{eq:posneg-ans}
    \mathcal{U}^{+}(\bx,\br,\ba)
    &= \{\tilde{\bu}_j\;:\; \mathrm{Agr}(\ba,\tilde{\ba}_j)=1\}, \\
    \mathcal{U}^{-}(\bx,\br,\ba)
    &= \{\tilde{\bu}_j\;:\; \mathrm{Agr}(\ba,\tilde{\ba}_j)=0\}.
\end{align}
Intuitively, $\mathcal{U}^{+}$ collects alternative internal realizations that lead to the {agreeing} answer, while $\mathcal{U}^{-}$ collects realizations that lead to a disagreeing answer.
Our hypothesis is that hallucinated answers may exhibit a larger nearby region that maps into $\mathcal{U}^{-}$, reflecting a smaller internal stability margin.

\paragraph{Shaping embeddings by answer agreement.}
\model learns a lightweight mapping $g_\phi:\mathbb{R}^d\!\rightarrow\!\mathbb{R}^k$ that transforms the vanilla answer embedding into a shaped representation:
$
    \bz = g_\phi(\bu).
$
Throughout, the base model parameters $\theta$ remain fixed; only $\phi$ is optimized.

Specifically, we optimize $g_\phi$ so that agreement-preserving embeddings concentrate while disagreement embeddings are pushed apart.
 For each anchor $\bz$ (original answer) we sample one agreeing embedding $\tilde{\bz}^+\sim \{g_\phi(\tilde{\bu}^+): \tilde{\bu}^+\in\mathcal{U}^+\}$ and treat the set of disagreement embeddings $\mathcal{Z}^-=\{g_\phi(\tilde{\bu}^-):\tilde{\bu}^-\in\mathcal{U}^-\}$ as competing alternatives.
We minimize the following objective:
\begin{equation}
\label{eq:infoNCE}
\small
\mathcal{L}_{\model}
=
-\frac{\mathrm{sim}(\bz,\tilde\bz^+)}{\tau}
+
\log \sum_{\tilde\bz' \in \{\tilde\bz^+\}\cup \mathcal Z^-}
\exp\!\Big(\frac{\mathrm{sim}(\bz,\tilde\bz')}{\tau}\Big),
\end{equation}

where $\mathrm{sim}(\cdot,\cdot)$ is cosine similarity, and $\tau>0$ is a temperature.
This objective explicitly increases the relative similarity between the original answer embedding and an answer-agreeing embedding, while decreasing similarity to answer-disagreeing embeddings, thereby shaping the LRM embeddings that directly reflect answer stability. Our framework \model is summarized in Algorithm~\ref{alg:model}.

\begin{table*}[!t]
\centering
\caption{\small \textbf{Comparison with competitive hallucination detection methods on different datasets}. ``Single sampling'' indicates whether the approach requires multiple generations during inference. ``Supervision'' indicates whether the approach requires ground truth annotation during training or testing. All values are percentages (AUROC). We present the results of \model with CCS and supervised probing as downstream detectors. The best results are highlighted in \textbf{bold}.}
\label{tab:all-detectors}
\scalebox{0.75}{ \begin{tabular}{c l c c cccc}
\toprule
\textbf{Model} & \textbf{Method} &  \textbf{Single Sampling}&   \textbf{Supervision}& \textbf{TruthfulQA} & \textbf{TriviaQA} & \textbf{GSM8K} & \textbf{MATH-500} \\
\midrule
\multirow{12}{*}{Qwen3-8B}
& Perplexity~\cite{ren2022out}       & \cmark & \xmark &59.72  &53.00  &60.80  &51.62  \\
& Semantic Entropy~\cite{kuhn2023semantic}       & \xmark   & \xmark&65.60   &58.37  &72.51  &56.13  \\
& Lexical Similarity~\cite{lin2023generating}    &   \xmark &  \xmark&58.81   &62.03  &66.38  &44.13  \\
& SelfCKGPT~\cite{manakul2023selfcheckgpt}     &  \xmark & \xmark & 52.15 &53.84  &54.33  &55.47  \\
& Verbalized Certainty~\cite{lin2022teaching}& \cmark&\xmark & 45.37  &35.89  &43.27  &23.87  \\
&TSV~\cite{park2025steer}       & \cmark &\cmark &77.08  &89.67  &83.15  &63.12  \\
\cmidrule(lr){2-8}
& \multicolumn{7}{c}{\textit{LRM-based}} \\
& RHD~\cite{sun2025detection}   &   \cmark & \xmark &56.14   &56.53  &57.60  &50.51  \\
& RACE~\cite{wang2025joint}      &  \xmark & \xmark&67.57   &86.57  &72.55  &63.02  \\
& G-Detector~\cite{anonymous2025unraveling}       &  \cmark & \cmark&71.86   &90.52  &83.78  &57.67  \\

\rowcolor{gray!10}\cellcolor{white}& \textbf{\model (CCS)}    &   \cmark& \xmark & \textbf{86.64}  &88.54  &\textbf{90.37}  &\textbf{78.66}  \\
\rowcolor{gray!10}\cellcolor{white}& \textbf{\model (Probing) }     & \cmark &  \cmark  & {83.66} &\textbf{91.62}  &89.88  &78.17  \\
\midrule
\multirow{12}{*}{\shortstack{DeepSeek-R1-\\Distill-Llama-8B}}
& Perplexity~\cite{ren2022out}      &\cmark   &\xmark&56.62  &48.56  &58.48  &40.96  \\
& Semantic Entropy~\cite{kuhn2023semantic}      & \xmark & \xmark&55.47   &49.97  &61.98  &43.60  \\
& Lexical Similarity~\cite{lin2023generating}    &   \xmark& \xmark &58.64   &50.27  &56.01  &49.92  \\
& SelfCKGPT~\cite{manakul2023selfcheckgpt}    &  \xmark& \xmark &55.95  &50.33  &50.58  &59.15  \\
& Verbalized Certainty~\cite{lin2022teaching} &\cmark&\xmark &50.00   &49.88  &50.00  &48.98  \\
& TSV~\cite{park2025steer}  &    \cmark &  \cmark &69.49  &85.73  &\textbf{78.29} &63.24  \\
\cmidrule(lr){2-8}
& \multicolumn{7}{c}{\textit{LRM-based}} \\
& RHD~\cite{sun2025detection}     &    \cmark & \xmark&56.64   &51.07  &61.67  &56.50  \\
& RACE~\cite{wang2025joint}         & \xmark&\xmark &62.44  &49.94  &68.59  &53.55  \\
& G-Detector~\cite{anonymous2025unraveling}      & \cmark & \cmark&70.01   &52.25  &70.38  &64.45  \\

\rowcolor{gray!10}\cellcolor{white}& \textbf{\model (CCS) }     & \cmark & \xmark   & \textbf{80.89} &\textbf{88.86}  &74.72  &\textbf{86.38}  \\
\rowcolor{gray!10}\cellcolor{white}& \textbf{\model (Probing) }     & \cmark &\cmark    & {76.98} &87.45  &{77.62}  &79.95  \\
\bottomrule
\end{tabular}}
\end{table*}

\paragraph{Mathematical interpretation.}
We provide a simple analysis connecting our answer-agreement  objective to hallucination detection. Specifically, we show that \model-shaped embeddings can achieve a bounded hallucination detection error (evaluated by supervised probing~\cite{azaria2023internal}), where the bound is relevant to our shaping objective. Firstly, we define:

\begin{definition}[Answer stability score]
\label{def:alpha}
Given a generation $(\bx,\br,\ba)$ from a frozen LRM $\pi_\theta$, let
$\bh := \bh_L(\bx\oplus\br)$ denote the trace-boundary representation.
For a perturbation $\boldsymbol{\delta}\sim\mathcal{D}$, define the counterfactual answer
$\tilde{\ba}(\boldsymbol{\delta}):=\mathrm{Decode}_\theta(\bx\oplus\br;\bh+\boldsymbol{\delta})$.
The \emph{answer stability score} is
\begin{equation}
\label{eq:alpha_def_main}
\alpha(\bx,\br,\ba)
:= \Pr_{ \boldsymbol{\delta}\sim\mathcal{D}}\!\left[
\mathrm{Agr}\big(\ba,\tilde{\ba}( \boldsymbol{\delta})\big)=1
\right]\in[0,1].
\end{equation}
\end{definition}

Recall that \model shapes embeddings so that an \emph{agreement} sample $\tilde{\bz}^+$ is closer to the anchor $\bz$
than a \emph{disagreement} sample $\tilde{\bz}^-$. Define the agreement-separation indicator for one triple $(\bz,\tilde{\bz}^+,\tilde{\bz}^-)$:
\begin{equation}
\label{eq:sep_def}
\mathsf{Sep}(\bz,\tilde{\bz}^+,\tilde{\bz}^-)
:= \mathbf{1}\!\left\{\mathrm{sim}(\bz,\tilde{\bz}^+) \ge \mathrm{sim}(\bz,\tilde{\bz}^-)\right\}.
\end{equation}
Let $\eta_\phi := \Pr\big[\mathsf{Sep}(\bz,\tilde{\bz}^+,\tilde{\bz}^-)=1\big]$ denote the probability of agreement separation under our construction
(Section~\ref{sec:method2}), and we have the following:

\begin{proposition}
\label{prop:agree_bound_main}
(Informal.) Let $y\in\{0,1\}$ be the truthfulness label. Define
$e_\alpha := \inf_T \Pr\!\left(\mathbf{1}\{\alpha\ge T\}\neq y\right),$
the best achievable hallucination detection error if the stability score $\alpha$ were observed.
There exists a constant $C >0$ and a detector $\hat{y}$ computable from a supervised probe on \model's shaped embeddings such that
\begin{equation}
\label{eq:agree_bound_main}
\Pr(\hat{y}\neq y)
\;\le\;
C(1-\eta_\phi) + e_\alpha.
\end{equation}
\end{proposition}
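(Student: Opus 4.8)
The plan is to exhibit the detector $\hat{y}$ explicitly and trace its error through two sources: the approximation error of using $\alpha$ as a proxy for $y$ (which contributes $e_\alpha$), and the error incurred because the shaped embeddings only approximately encode $\alpha$ (which contributes $C(1-\eta_\phi)$). First I would observe that the agreement-separation event $\mathsf{Sep}(\bz,\tilde\bz^+,\tilde\bz^-)=1$ is, by construction, a Bernoulli trial whose success probability conditioned on a fixed generation $(\bx,\br,\ba)$ is governed by the answer stability score $\alpha(\bx,\br,\ba)$: an agreeing counterfactual is drawn from $\mathcal{U}^+$ and a disagreeing one from $\mathcal{U}^-$, so when $\alpha$ is large the anchor $\bz$ sits in a region dominated by agreement states and the cosine comparison tends to favor $\tilde\bz^+$. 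The key lemma I would prove is that $g_\phi$ being a good separator (large $\eta_\phi$) forces the shaped embedding $\bz$ to carry enough information about $\alpha$ that a simple statistic — e.g. the average cosine similarity of $\bz$ to a held-out pool of agreement embeddings, or equivalently a one-dimensional linear probe direction — recovers $\alpha$ up to an error controlled by $1-\eta_\phi$. Concretely, I would show there is a measurable map $\widehat{\alpha}(\bz)$ with $\mathbb{E}\,|\widehat\alpha(\bz)-\alpha|\le C'(1-\eta_\phi)$ for a universal constant $C'$, using a Markov-type argument: the event that separation fails has probability $1-\eta_\phi$, and over the randomness of the counterfactual sampling this failure probability is itself a monotone function of $\alpha$, so averaging gives the bound.

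Next I would define the detector $\hat y := \mathbf{1}\{\widehat\alpha(\bz)\ge T^\star\}$, where $T^\star$ is the optimal threshold achieving $e_\alpha$ for the idealized rule $\mathbf{1}\{\alpha\ge T\}$. Since a supervised probe on $\bz$ can realize any threshold rule on the recovered coordinate $\widehat\alpha(\bz)$ (this is the "computable from a supervised probe on \model's shaped embeddings" clause), $\hat y$ is admissible. I would then decompose
\begin{equation}
\Pr(\hat y\neq y)\;\le\;\Pr\!\big(\mathbf{1}\{\widehat\alpha(\bz)\ge T^\star\}\neq \mathbf{1}\{\alpha\ge T^\star\}\big)\;+\;\Pr\!\big(\mathbf{1}\{\alpha\ge T^\star\}\neq y\big),
\end{equation}
where the second term is exactly $e_\alpha$. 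For the first term, a sign disagreement between the two thresholded quantities requires $\alpha$ to lie within $|\widehat\alpha(\bz)-\alpha|$ of the threshold $T^\star$; bounding this crudely by $\Pr(|\widehat\alpha(\bz)-\alpha|\ge \epsilon) + \Pr(|\alpha-T^\star|<\epsilon)$ and taking $\epsilon\to 0$ under a mild no-mass-at-boundary assumption (or absorbing a margin term into the constant), Markov's inequality on $\mathbb{E}|\widehat\alpha(\bz)-\alpha|\le C'(1-\eta_\phi)$ yields $\Pr(\mathbf{1}\{\widehat\alpha\ge T^\star\}\neq\mathbf{1}\{\alpha\ge T^\star\})\le C(1-\eta_\phi)$ after setting $C$ appropriately. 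Combining the two pieces gives $\Pr(\hat y\neq y)\le C(1-\eta_\phi)+e_\alpha$, as claimed.

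The main obstacle I anticipate is making rigorous the step from "high separation probability $\eta_\phi$" to "recoverability of $\alpha$ from a single shaped vector $\bz$" — i.e., constructing the map $\widehat\alpha$ and controlling $\mathbb{E}|\widehat\alpha(\bz)-\alpha|$. The subtlety is that $\mathsf{Sep}$ is defined over a triple involving fresh counterfactual draws, whereas $\hat y$ must depend on $\bz$ alone; bridging this requires either (i) conditioning on the generation and treating $\alpha$ as the conditional separation rate, then marginalizing, or (ii) positing that the shaping objective drives $\bz$ toward a near-one-dimensional embedding of $\alpha$ so that a population linear probe suffices. I would proceed with route (i), which keeps the argument distribution-free and lets $C$ be genuinely universal, and I would flag that any residual slack — finite-sample estimation of the probe direction, non-atomicity of $\alpha$ at $T^\star$, or imperfect calibration of $\mathrm{Agr}$ — is exactly what the informal qualifier and the unspecified constant $C$ are meant to absorb. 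A clean statement of the needed regularity condition (e.g. the CDF of $\alpha$ is $L$-Lipschitz near $T^\star$) would be stated as a remark accompanying the formal version.
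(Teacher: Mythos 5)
Your overall architecture matches the paper's: construct a stability surrogate $r(\bz)\approx\alpha$, threshold it at the oracle $T^\star$, and split the error into $e_\alpha$ plus the cost of the surrogate error, invoking a Lipschitz-type regularity of the law of $\alpha$ near $T^\star$ (this is exactly Assumption H.4 in the appendix, $\Pr(|\alpha-T^\star|\le\epsilon)\le B\epsilon$). That said, two of your steps do not go through as written.

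First, the bridge from $\eta_\phi$ to $\mathbb{E}\bigl[|\widehat\alpha(\bz)-\alpha|\bigr]\le C'(1-\eta_\phi)$ is asserted via ``the failure probability is a monotone function of $\alpha$, so averaging gives the bound,'' which is not a proof: monotonicity alone does not yield an $L^1$ control on $\widehat\alpha-\alpha$. The paper instead builds an explicit \emph{randomized} agreement predictor: for each fresh counterfactual score $S$ it samples a reference $R$ from the opposite conditional ($S^-\mid\bz$ when $A=1$, $S^+\mid\bz$ when $A=0$) and sets $\hat A=\mathbf{1}\{S\ge R\}$; a short case analysis gives $\Pr(\hat A\neq A\mid\bz)\le 1-\eta_\phi(\bz)$, and then $r(\bz):=\Pr_\delta[\hat A=1\mid\bz]$ satisfies $|r(\bz)-\alpha|\le\Pr_\delta(\hat A\neq A)$ by a one-line Bernoulli coupling. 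This is the missing ingredient: without a concrete construction tying separation to agreement prediction, the $L^1$ bound on the surrogate is not justified.

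Second, your handling of $\Pr\bigl(\mathbf{1}\{\widehat\alpha\ge T^\star\}\neq\mathbf{1}\{\alpha\ge T^\star\}\bigr)$ via ``$\Pr(|\widehat\alpha-\alpha|\ge\epsilon)+\Pr(|\alpha-T^\star|<\epsilon)$, take $\epsilon\to0$, apply Markov'' does not produce the claimed linear rate. If you fix $\epsilon$ and apply Markov you get $C'(1-\eta_\phi)/\epsilon + B\epsilon$, which optimizes to $O(\sqrt{1-\eta_\phi})$; if you instead send $\epsilon\to0$ the Markov term blows up. The correct move, and the one the paper makes (Lemma H.5), uses no Markov at all: the sign-disagreement event is contained in $\{|\alpha-T^\star|\le|\widehat\alpha-\alpha|\}$, and the margin assumption applied directly (conditioning on the magnitude of the surrogate error) gives $\Pr(\hat y\neq\hat y^\star)\le B\,\mathbb{E}|\widehat\alpha-\alpha|\le B(1-\eta_\phi)$, so $C=B$. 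Finally, a minor omission: you treat ``computable from a supervised probe'' as free, whereas the paper's formal statement carries an explicit probe-approximation term $\epsilon_{\mathrm{probe}}$ (Assumption H.6), since a linear or small-MLP probe need not represent $\mathbf{1}\{r(\bz)\ge T^\star\}$ exactly; the informal version absorbs this into $C(1-\eta_\phi)+e_\alpha$, but a careful write-up should keep it.
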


\paragraph{Implication.} Proposition~\ref{prop:agree_bound_main} links the hallucination detection error of a supervised probe on \model-shaped embeddings to two terms: a \emph{label-free} agreement-separation quantity $(1-\eta_\phi)$ and $e_\alpha$ that captures how informative stability $\alpha$ is for truthfulness. The first term is exactly what our shaping objective promotes, so improving $\eta_\phi$ tightens the bound. When stability is predictive of correctness (small $e_\alpha$, empirically verified in Section~\ref{sec:ablationstudy}), this implies that stronger agreement separation directly translates into lower hallucination detection error. Full statements and proofs are in Appendix~\ref{app:theory}.

\subsection{Test-time Detection}
\label{sec:testtime}

At test time, given a prompt $\overline{\bx}$, a reasoning trajectory $\overline{\br}$, and a final answer $\overline{\ba}$ produced by the frozen LRM $\pi_\theta$, we extract the {trace-conditioned answer embedding} via $ \overline{\bz} = g_\phi(\overline{\bu}) \in \mathbb{R}^k$, which is then fed to various embedding-based detection scoring functions~\cite{du2024haloscope,burns2022discovering,azaria2023internal} and the scores can be thresholded to obtain a binary prediction.
Importantly, \model does \emph{not} require any sampling at test time; all perturbations are only used to shape the LRM embeddings during training.

\section{Experiments}
\label{scr:experiment}
In this section, we present comprehensive empirical evidence to validate the effectiveness of \model on diverse hallucination detection tasks. Section~\ref{sec:setup} details the setup, and Sections ~\ref{sec:mainresult}--\ref{sec:ablationstudy} provide results and detailed analyses.
\subsection{Setup}
\label{sec:setup}
\paragraph{Datasets and models.}
We evaluate \model on four representative reasoning tasks including both open-domain conversational question-answering (QA) and multi-step mathematical reasoning tasks. We use TruthfulQA~\cite{lin2022truthfulqa}, which contains 817 conversational QA pairs, and TriviaQA~\cite{joshi2017triviaqa}. Following~\cite{lin2023generating}, we utilize the deduplicated validation split of TriviaQA (rc.nocontext subset) with 9,960 QA pairs. For mathematical reasoning tasks, we use GSM8K~\cite{cobbe2021gsm8k} (train split, 7,473 problems) and MATH-500, a curated subset of MATH~\cite{hendrycks2021measuring} with 500 challenging problems. For all datasets, we reserve 25\% of the available data for testing, 100 examples for validation, and the remaining examples for training. By default, greedy decoding is used to generate model answers, though additional sampling strategies are analyzed in Appendix~\ref{apdx:sampling}. Additional experiments on broader reasoning domains are provided in Appendix~\ref{app:bbh-generalization}.

We evaluate our method using two LRM families: Qwen3-8B/14B~\cite{yang2025qwen3technicalreport} and DeepSeek-R1-Distill-Llama-8B/DeepSeek-R1-Distill-Qwen-14B~\cite{deepseekai2025deepseekr1incentivizingreasoningcapability}, which are widely adopted reasoning models with accessible internal representations suitable for intervention and shaping. All models are evaluated in a zero-shot setting using pretrained weights. More dataset and inference details are provided in Appendix~\ref{sec:prompt-app}.

\paragraph{Baselines.} We first compare the vanilla LRM embeddings vs. our trace-conditioned answer representations on four embedding-based detection methods, including supervised probing~\cite{azaria2023internal}, Contrast-Consistent Search (CCS)~\cite{burns2022discovering}, EigenScore~\cite{chen2024inside}, and HaloScope~\cite{du2024haloscope}. Then, we further compare \model with  a comprehensive collection of baselines, including
\emph{uncertainty-based} methods (Perplexity~\cite{ren2022out}, Semantic Entropy~\cite{kuhn2023semantic}), \emph{consistency-based} methods (Lexical Similarity~\cite{lin2023generating}, SelfCKGPT~\cite{manakul2023selfcheckgpt}),
\emph{verbalized} methods (Verbalized Certainty~\cite{lin2022teaching}), and \textit{embedding-steering} method (Truthfulness
Separator Vector (TSV)~\cite{park2025steer}). In addition, we include \emph{methods specifically designed for LRMs} (Reasoning and Answer Consistency Evaluation (RACE)~\cite{wang2025joint}, Reasoning Hallucination Detection (RHD)~\cite{sun2025detection} and graph-based detector (G-Detector)~\cite{anonymous2025unraveling}). To ensure a fair comparison, we assess all baselines on identical test data, employing the default experimental configurations as outlined in their respective papers. We discuss the  details for baselines in Appendix~\ref{sec:details-app}.

\paragraph{Evaluation.}
We evaluate performance with the area under the receiver operating characteristic curve (AUROC), which measures detection performance of a binary classifier under varying thresholds. Correctness labels are obtained using a strong external judge model Qwen3-32B following~\citet{yao2025reasoning}. Answer agreement $\mathrm{Agr}(\cdot,\cdot)$ is judged by the same LRM that generates the trace and answer. Additionally, we show the results remain robust under different agreement measures in Appendix~\ref{sec:label-judge-app}. Additional results under metrics beyond AUROC are provided in Appendix~\ref{app:extra-metrics}.

\paragraph{Implementation Details.} The lightweight mapping $g_{\phi}$ is implemented as a single linear projection without bias, which is optimized using Adam with a learning rate of 1e-4, weight decay of 1e-5, cosine learning rate decay, and a batch size of 128. Following~\citet{azaria2023internal}, we extract the last-token embedding of the answer for training the trace-conditioned representation. 
The layer of training \model, output dimension $k$, temperature $\tau$, training epochs, number of intervention $M$ and noise scale $\sigma$ are determined using the validation split (ablated in Section~\ref{sec:ablationstudy} and Appendix~\ref{appdx:ablation}). The details of the embedding-based detectors (supervised probing, CCS, EigenScore, and HaloScope) are provided in Appendix~\ref{sec:details-app}.

\subsection{Main Results}
\label{sec:mainresult}

\paragraph{Effect of answer-agreement representation shaping.} Table~\ref{tab:orig-vs-proj} compares hallucination detection using \emph{vanilla} LRM embeddings versus our \emph{\model-shaped} trace-conditioned answer representations across four representative embedding-based detectors. Across datasets and both LRMs, shaping consistently yields large gains, indicating that \model improves the \emph{intrinsic separability} of the shaped representations rather than relying on any specific scoring rule: on Qwen3-8B, CCS jumps from $66.85\!\rightarrow\!86.64$ on TruthfulQA and $59.24\!\rightarrow\!88.54$ on TriviaQA, while HaloScope improves from $57.78\!\rightarrow\!71.03$ on TruthfulQA and from $55.73\!\rightarrow\!67.89$ on TriviaQA; similar improvements hold for supervised probing and EigenScore, with especially large gains in challenging regimes such as supervised probing on MATH-500 ($67.03\!\rightarrow\!78.17$). Importantly, the trend also transfers to DeepSeek-R1-Distill-Llama-8B.

\begin{table}[t]
\centering
\caption{\small 
\textbf{Comparison of detecting hallucination using the vanilla LRM embeddings vs. our \model-shaped representations}.
All values are percentages (AUROC) on different embedding-based detection methods. The best results are highlighted in \textbf{bold}.
}
\label{tab:orig-vs-proj}
\tabcolsep 0.04in\renewcommand\arraystretch{0.745}{\small {}}%
\scalebox{0.5}{
\begin{tabular}{l l cc cc cc cc}
\toprule
\multirow{2}{*}{\textbf{Model}} & \multirow{2}{*}{\textbf{Dataset}} 
& \multicolumn{2}{c}{\makecell{\textbf{CCS}\\\cite{burns2022discovering}}}
& \multicolumn{2}{c}{\makecell{\textbf{Supervised Probing}\\\cite{azaria2023internal}}}
& \multicolumn{2}{c}{\makecell{\textbf{HaloScope}\\\cite{du2024haloscope}}}
& \multicolumn{2}{c}{\makecell{\textbf{EigenScore}\\\cite{chen2024inside}}} \\
\cmidrule(lr){3-4} \cmidrule(lr){5-6} \cmidrule(lr){7-8} \cmidrule(lr){9-10}
& & Vanilla & Shaped & Vanilla & Shaped & Vanilla & Shaped & Vanilla & Shaped \\
\midrule

\multirow{4}{*}{Qwen3-8B}
& TruthfulQA & 66.85 & \textbf{86.64} & 78.66 & \textbf{83.66} & 57.78 & \textbf{71.03} & 55.78 & \textbf{73.75} \\
& TriviaQA   &59.24     & \textbf{88.54}             & 86.64    & \textbf{91.62}             &55.73  & \textbf{67.89}            & 55.26    & \textbf{56.09}            \\
& GSM8K      & 57.98    & \textbf{90.37}             & 77.88    & \textbf{89.88}             & 66.24 & \textbf{77.49}            & 63.40    & \textbf{63.54}            \\
& MATH-500   &  55.64   & \textbf{78.66}             & 67.03    & \textbf{78.17}             & 59.81 & \textbf{66.79}            & 81.38    & \textbf{83.54}            \\
\midrule

\multirow{4}{*}{\shortstack{DeepSeek-R1-\\Distill-Llama-8B}}
& TruthfulQA & 59.62    & \textbf{80.89}             & 69.40    & \textbf{76.98}             & 58.10 & \textbf{66.59}            & 61.21    & \textbf{67.87}            \\
& TriviaQA   & 63.99    & \textbf{88.86}             & 48.83    & \textbf{87.45}             & 56.61 & \textbf{64.13}            &53.58     & \textbf{54.13}            \\
& GSM8K      & 53.30    & \textbf{74.72}             & 72.62    & \textbf{77.62}             & 55.16 & \textbf{58.62}            & 52.98    & \textbf{55.97}            \\
& MATH-500   & 54.44    & \textbf{86.38}             & 68.92    & \textbf{79.95}             & 63.77 & \textbf{73.34}            & 75.89    & \textbf{79.46}            \\
\bottomrule
\end{tabular}
}
\end{table}

\begin{figure*}[!t]
    \centering

    \begin{subfigure}[t]{0.24\textwidth}
        \centering
        \includegraphics[width=\linewidth]{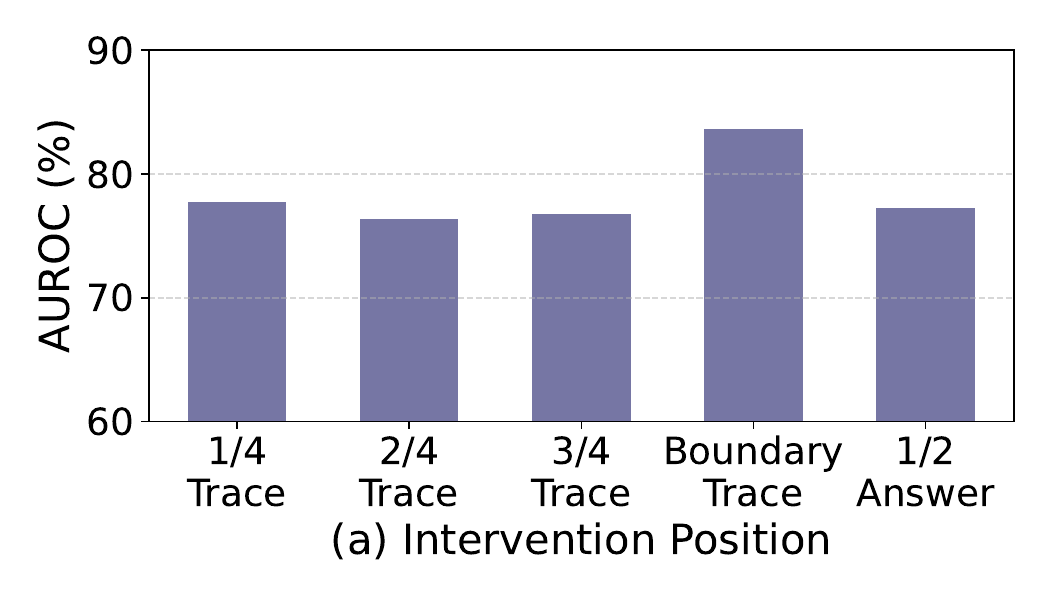}
        \label{fig:noise-position}
    \end{subfigure}
    \hfill
    \begin{subfigure}[t]{0.24\textwidth}
        \centering
        \includegraphics[width=\linewidth]{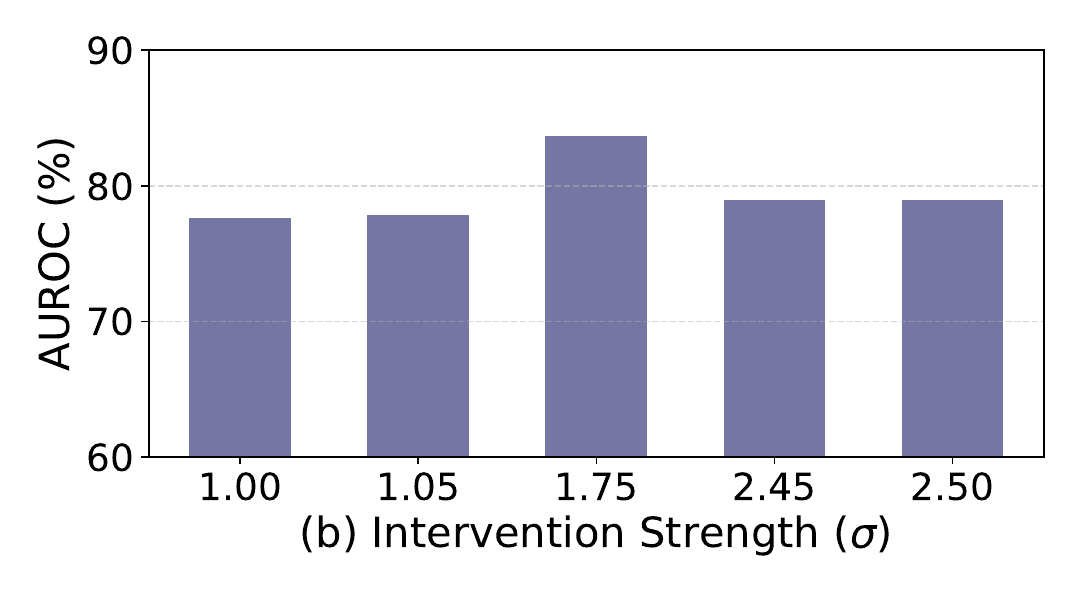}
        \label{fig:noise-alpha}
    \end{subfigure}
    \hfill
    \begin{subfigure}[t]{0.24\textwidth}
        \centering
        \includegraphics[width=\linewidth]{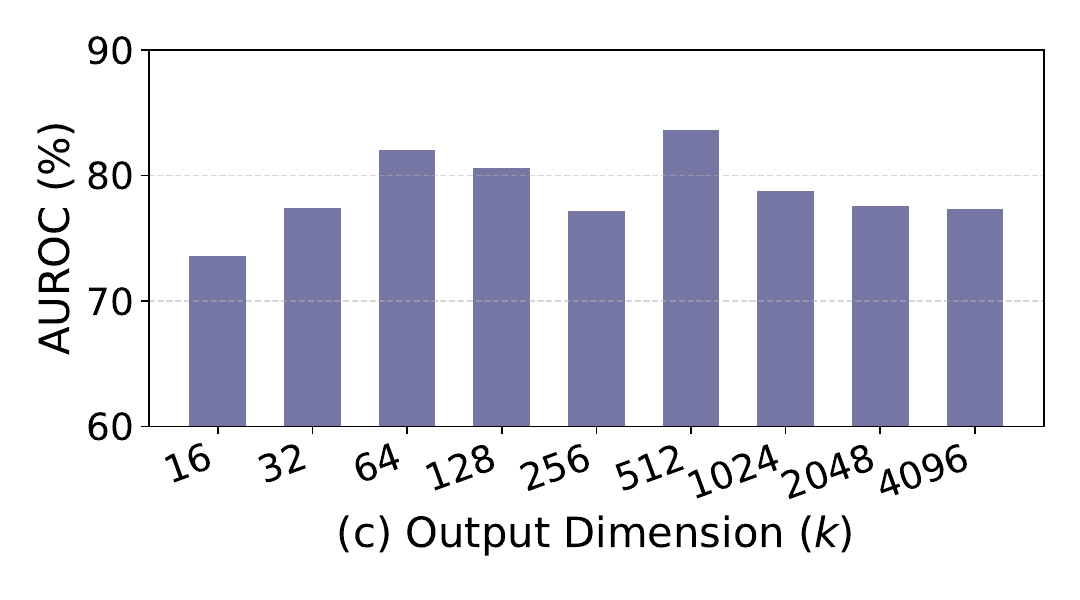}
        \label{fig:sub_c}
    \end{subfigure}
    \hfill
    \begin{subfigure}[t]{0.24\textwidth}
        \centering
        \includegraphics[width=\linewidth]{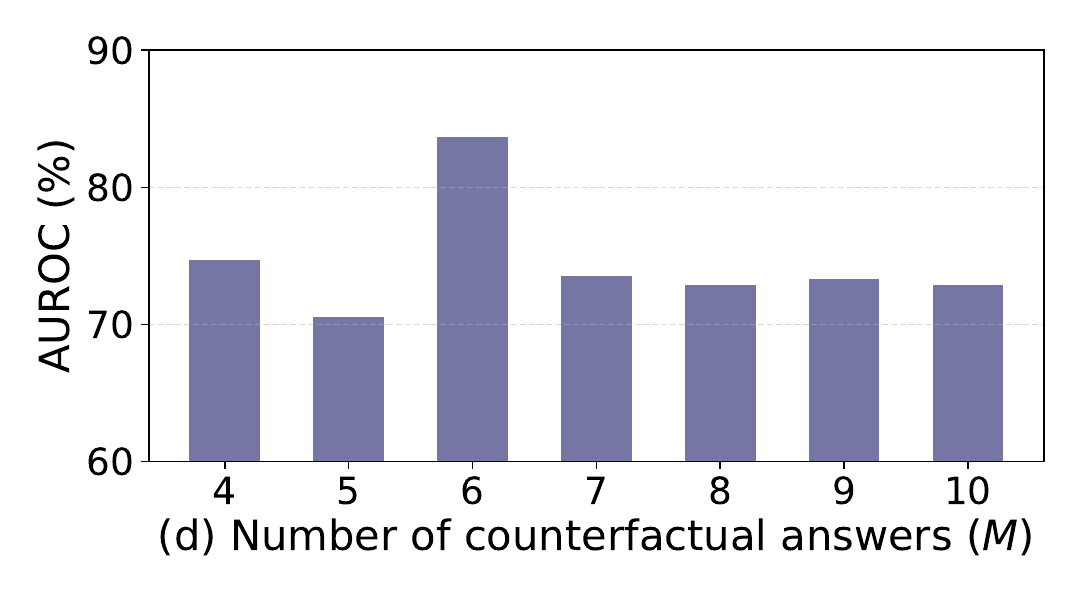}
        \label{fig:sub_d}
    \end{subfigure}
    \vspace{-2em}
    \caption{
    (a) Effect of intervention position,  (b) effect of intervention strength $\sigma$, (c) effect of output dimension $k$ for the trace-conditioned answer representations, and (d) effect of number of counterfactual answers $M$. All results are reported on TruthfulQA using Qwen3-8B. The downstream detector is probing.
    }
    \label{fig:four_horizontal}
\end{figure*}
\paragraph{Comparison with competitive detection baselines.}
Table~\ref{tab:all-detectors} compares \model against a broad suite of detection baselines, including logit-based uncertainty, self-consistency methods that require multiple generations, verbalized confidence prompting, and recent LRM-specific detectors. \model achieves the strongest performance, delivering a large absolute gain on, e.g., TruthfulQA under Qwen3-8B (AUROC 86.64\%), substantially surpassing both general-purpose baselines  and detectors tailored to LRMs, which remain near the low-70s. Notably, \model also outperforms TSV, a semi-supervised representation steering approach, while requiring \emph{zero} human annotations by deriving training signals solely from answer agreement under latent interventions. Beyond accuracy, \model can be computationally practical during inference: unlike sampling-based methods that incur multiple forward passes, \model can directly get the shaped embedding of a \emph{single} model generation for downstream hallucination scoring. The results on a different dataset split are presented in Appendix~\ref{sec:diff-splita-app}. Computational cost is analyzed in Appendix~\ref{sec:comp-time-app}.

\begin{figure}[!b]
  \vspace{-1.6em}
   \begin{subfigure}[t]{0.2\textwidth}
        \centering
        \includegraphics[width=\linewidth]{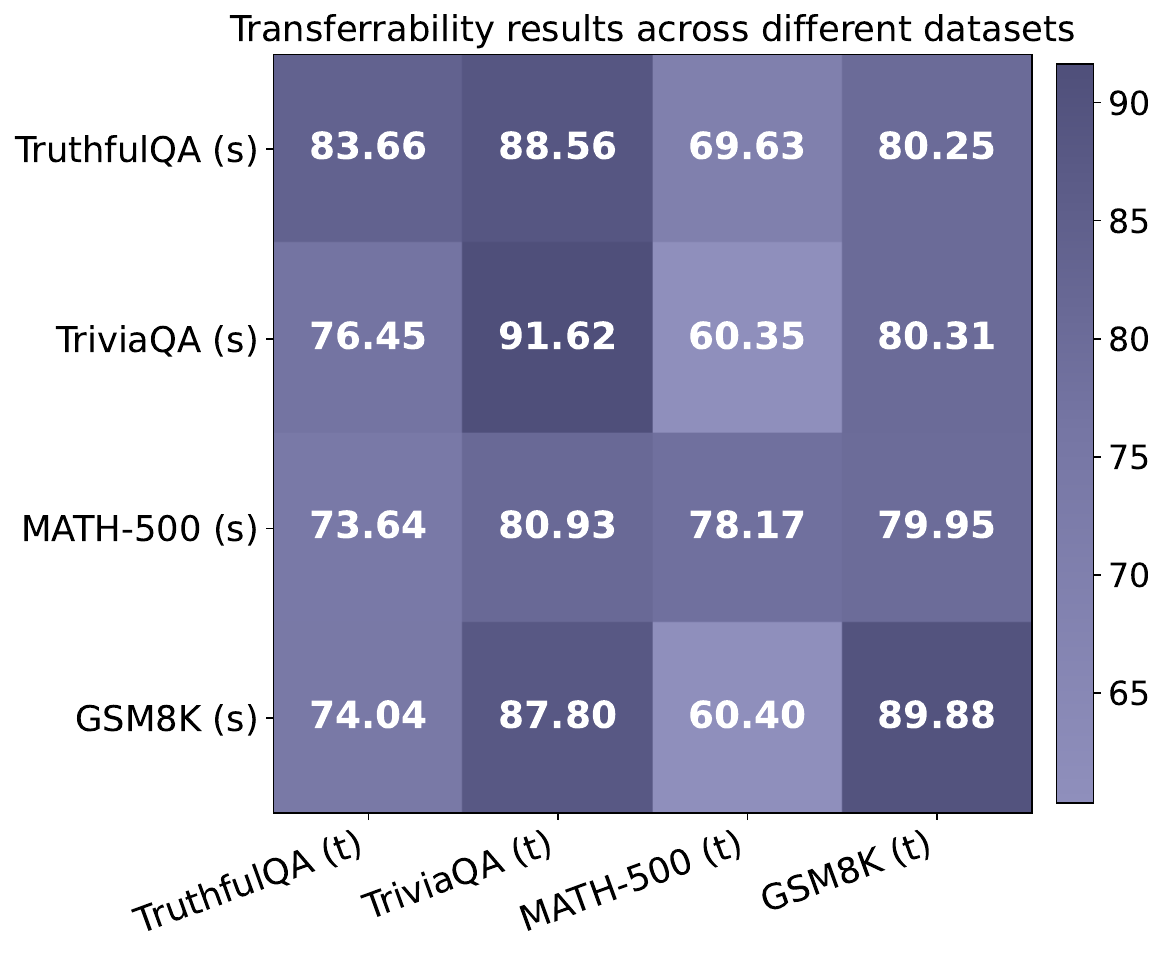}
    \end{subfigure}
    \hfill
     \begin{subfigure}[t]{0.25\textwidth}
        \centering
        \includegraphics[width=\linewidth]{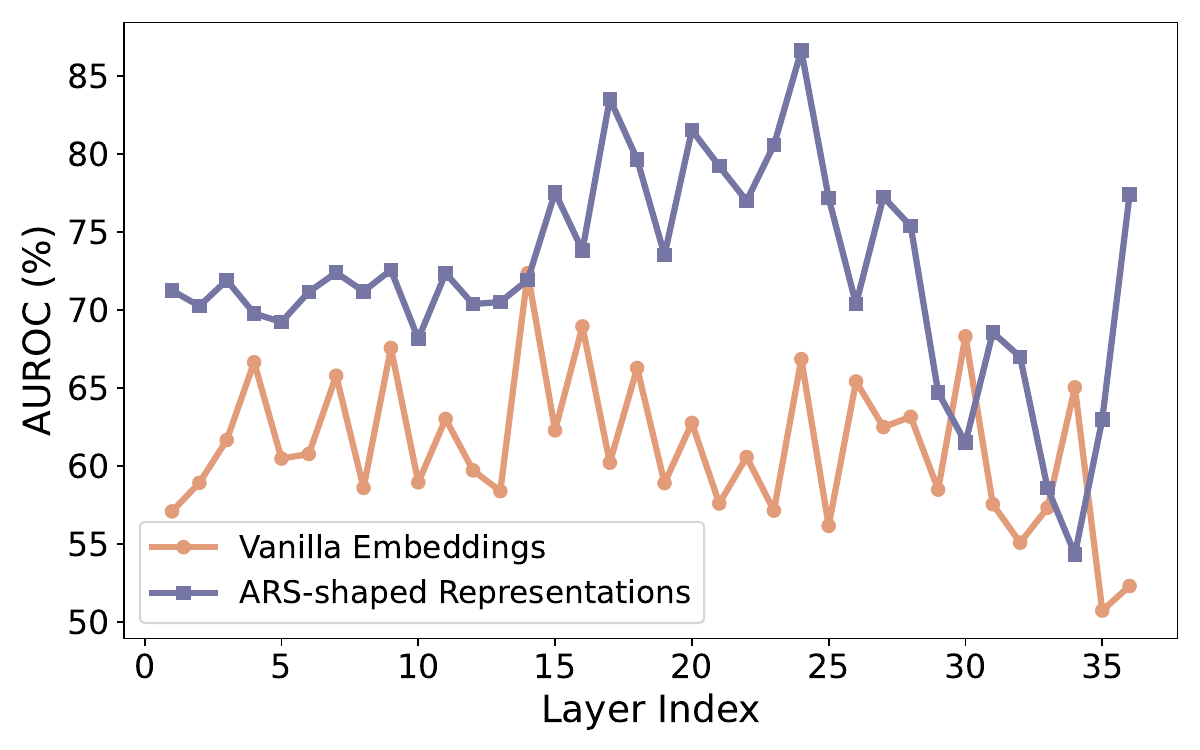}
    \end{subfigure}
    \hfill
  \vspace{-0.7em}
    \caption{\small 
   (\textit{left}) Generalization across datasets, where “(s)” denotes the source data and “(t)” denotes the target data. (\textit{right}) Hallucination detection performance of \model and using vanilla embeddings across different layers (on TruthfulQA). Model used is Qwen3-8B for both (\textit{left}) and (\textit{right}).
    }
    \label{fig:confusion-matrix}
\end{figure}

\subsection{Generalizability and Scalability of \model}
\label{sec:robustness}

\paragraph{Can \model generalize across data distributions?} Following~\citet{park2025steer}, we evaluate whether the embeddings shaped on a \emph{source} dataset $s$ remains effective under distribution shift. Concretely, we train \model on $s$ and apply the learned lightweight mapping $g_{\phi}$ to a different \emph{target} dataset $t$ for downstream detection. As shown in Figure~\ref{fig:confusion-matrix} (\textit{left}), \model transfers reliably across diverse datasets: for example, learning on GSM8K and testing on TriviaQA achieves 87.80\% AUROC, approaching the in-domain result obtained when learning directly on TriviaQA (91.62\%, \textit{probed} on the shaped embeddings). These results indicate that \model captures a stability-driven signal that is largely invariant to dataset-specific surface form, enabling robust hallucination detection for LRMs even under domain shifts.

\begin{table}[!t]
\centering
\caption{ \small Hallucination detection results on larger LRMs. All results are reported based on supervised probing.}
\label{tab:detector-comparison}
\scalebox{0.7}{\begin{tabular}{lcc|cc}
\toprule
\multirow{2}{*}{\textbf{Method}} 
& \multicolumn{2}{c}{\textbf{Qwen3-14B}} 
& \multicolumn{2}{c}{\textbf{DeepSeek-R1-Distill-Qwen-14B}} \\
\cmidrule(lr){2-3} \cmidrule(lr){4-5}
& TruthfulQA & MATH-500 & TruthfulQA & MATH-500 \\
\midrule
TSV &73.41  &80.58  &76.92  &71.46  \\
G-Detector   &69.89  &74.72  &68.44  &71.41  \\
\rowcolor{gray!10} \textbf{\model (Ours) }      &\textbf{77.47}  &\textbf{84.67}  &\textbf{78.52}  &\textbf{79.67}  \\
\bottomrule
\end{tabular}}
\end{table}

\paragraph{ARS scales effectively to larger LRMs.}
To assess scalability, we further evaluate \model on larger LRMs, including Qwen3-14B and DeepSeek-R1-Distill-Qwen-14B. As shown in Table~\ref{tab:detector-comparison}, \model consistently outperforms the two strongest baselines across settings: on TruthfulQA, \model achieves an AUROC of 77.47\% with Qwen3-14B, improving over TSV by 4.06\%, indicating that the stability cues exposed by answer-agreement shaping remains effective in higher-capacity LRMs. Additional results on broader model coverage are provided in Appendix~\ref{sec:broader-coverage}.

\subsection{A Closer Look at  \model}

\label{sec:ablationstudy}
In this section, we conduct a series of in-depth analyses to understand \model. Additional ablations are  in Appendix~\ref{appdx:ablation}.

\paragraph{Ablation on different 
intervention methods.}
Table~\ref{tab:proj-only} systematically compares alternative intervention strategies for constructing answer-agreeing and answer-disagreeing samples. In addition to our latent intervention at the trace boundary, we consider three text-space interventions on the reasoning trace: token deletion, token masking, and trace paraphrasing. For deletion and masking, we sweep the intervention ratio from 
10\% to 
90\% (step size 
10\%) and report the best-performing setting; for paraphrasing, we prompt the LRM itself to rewrite the trace while preserving semantics (prompts in Appendix~\ref{sec:prompt-app}). 

\begin{table}[!b]
\centering
\caption{
Effect of the intervention strategies on four embedding-based detection methods. All results are reported on TruthfulQA using Qwen3-8B.
}
\label{tab:proj-only}
\resizebox{\linewidth}{!}{
\begin{tabular}{l c c c c}
\toprule
\textbf{Intervention Strategies} 
& \textbf{CCS} 
& \textbf{Probing} 
& \textbf{HaloScope} 
& \textbf{EigenScore} \\
\midrule

Deletion        & 75.25 & 73.43 & 65.54 & 61.44 \\
Masking       & 75.50  & 79.19  & 63.36 & 53.78 \\
Paraphrase    & 50.47  & 49.35  & 50.24 & 49.85 \\
\rowcolor{gray!10} \textbf{\model (Ours) }& \textbf{86.64} & \textbf{83.66}  & \textbf{71.03} & \textbf{73.75}\\
\bottomrule
\end{tabular}
}
\end{table}

\begin{figure*}[t]
    \centering
    \begin{minipage}{0.24\textwidth}
        \centering
        \includegraphics[width=\linewidth]{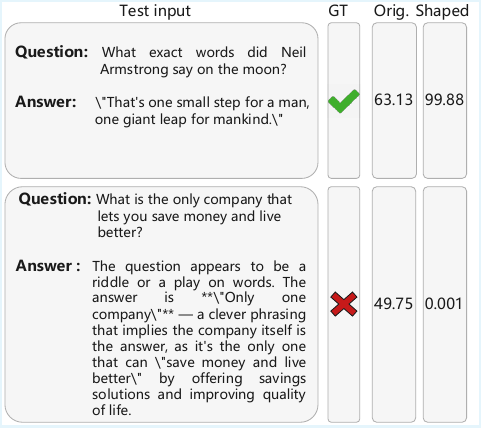}
        \tiny (a) Qualitative examples from TruthfulQA 

    \end{minipage}
    \hfill
    \begin{minipage}{0.32\textwidth}
        \centering
        \includegraphics[width=\linewidth]{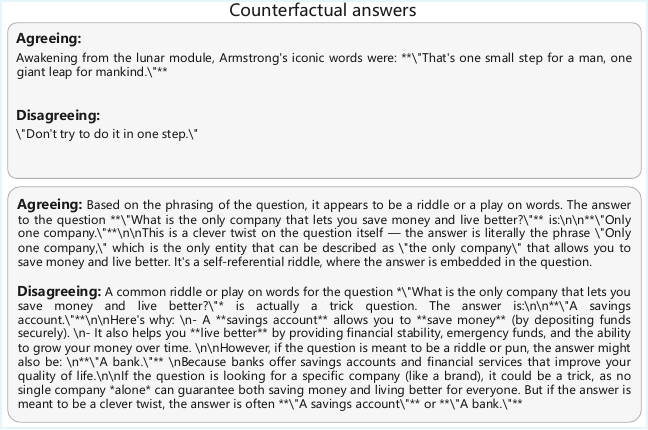}
        \tiny (b) Counterfactual answers for training the lightweight mapping
    \end{minipage}
    \hfill
    \begin{minipage}{0.43\textwidth}
        \centering
        \includegraphics[width=\linewidth]{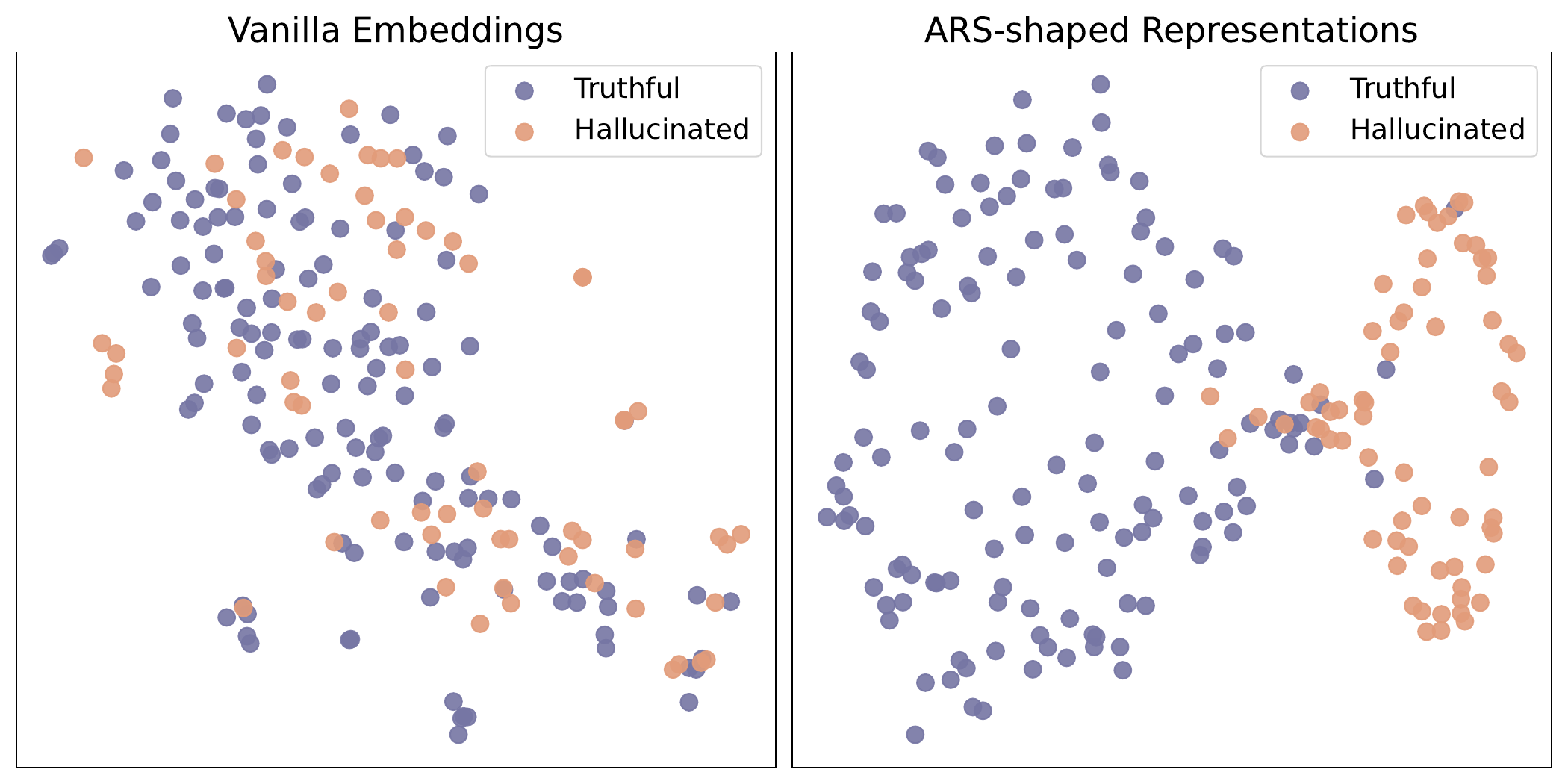}
        \tiny(c) T-SNE visualization of vanilla embeddings and ARS-shaped representations
    \end{minipage}

    \caption{\small 
         (a) Hallucination detection scores of our \model  and using vanilla LRM embeddings (Reasoning trace is omitted for easier presentation. Larger score denotes more truthfulness).
         (b)  Counterfactual answers generated for the examples in (a).
         (c) Embedding visualization comparing vanilla and \model-shaped representation. The model is Qwen3-8B and we utilize questions in TruthfulQA.  
     }
    \label{fig:qualityresult}
\end{figure*}

Overall, \model achieves the strongest performance across all embedding-based detectors. Unlike deletion or masking which often disrupt trace format and coherence in ways that are weakly coupled to answer validity and can introduce spurious cues, our latent perturbations operate directly on the trace boundary, producing counterfactual answers that hold both diversity and relevance. Paraphrasing performs particularly poorly, suggesting that simple rewrites can dilute hallucination-discriminative signals in the produced counterfactual answers, and thus are less useful for representation shaping. In contrast, latent noise injection yields reliable answer agreement/disagreement while minimally altering trace surface form, leading to a more faithful stability signal and substantially improved downstream detection.

\paragraph{How does the intervention position affect performance?}
Moving forward, we further investigate the impact of the position where intervention is applied on overall performance using Qwen3-8B. In Figure~\ref{fig:four_horizontal} (a), we present the effect of  the relative position within the trajectory and answer where the intervention is applied.
We find that 
intervening at the trajectory boundary yields the best performance. Intervention applied too early destabilizes the entire trajectory and reduce the proportion of agreeing answers. Intervening in mid-answer can be constrained by earlier answer tokens, leading to less diverse samples.  This finding empirically validates our design choice of latent intervention (Section~\ref{sec:method1}). 
Additional ablations on interventions within the trace boundary region (e.g., last token vs. last 1\%/2\% tokens) are provided in Appendix~\ref{appdx:ablation}, and sensitivity analyses on perturbation design are reported in Appendix~\ref{app:sensitivity-perturbation}.

\paragraph{Effect of the intervention strength.}
To better understand the characteristics of latent intervention, we vary the intervention strength $\sigma \in \{1.00,1.05,1.75,2.45,2.50\}$ and analyze its effect on the performance, as demonstrated in Figure~\ref{fig:four_horizontal} (b).
The results show that performance improves with the moderate intervention strength (\emph{e.g.}, $\sigma=1.75$), but declines as $\sigma$ increases further. Weak intervention fails to generate sufficiently informative disagreeing answers, while overly strong intervention introduces much easier counterfactual samples, resulting in suboptimal performance. We provide corresponding textual answers in Appendix~\ref{sec:text-exam-sigma-app}.

\paragraph{Effect of the output dimension and number of counterfactual answers.} We study how two key hyperparameters influence \model: the output dimension $k$ of the shaping head $g_\phi$ , and the number of perturbations $M$ used to generate counterfactual answers per example during training (Figure~\ref{fig:four_horizontal} (c)-(d)). Overall, we find that \model is stable across a wide range of settings. Increasing $k$ generally improves performance up to a moderate dimension, after which gains saturate, indicating that the hallucination signal is low-dimensional, which aligns with~\citet{du2024haloscope}. Similarly, a value of 6 for $M$ yields more reliable counterfactual sets and improves detection, but the benefit exhibits diminishing returns beyond it, suggesting that \model can be trained efficiently with limited sampling. Unless otherwise specified, we use $k=\texttt{512}$ and $M=\texttt{6}$ in all experiments.

\paragraph{How do different layers impact \model’s performance?}
In Figure~\ref{fig:confusion-matrix} (\textit{right}), we delve into training \model using embeddings from different layers (evaluated on CCS). All other configurations are kept the same as our main experiments. Consistent with prior findings, intermediate layers provide more discriminative signals than early layers. Notably, \model improves separability across almost {all} layers compared to using the vanilla LRM embeddings, indicating that \model does not rely on a specific layer depth for improvement.

\paragraph{Qualitative results.}
We present qualitative examples of the hallucination detection scores predicted by \model (evaluated on probing) (Figure~\ref{fig:qualityresult} (a)). Leveraging the diverse and informative counterfactual answers (Figure~\ref{fig:qualityresult} (b)), \model can  produce much more separable scores that align with the answer truthfulness  (higher the better). In addition, we also visualize the embeddings of \model and the vanilla LRM in Figure~\ref{fig:qualityresult} (c), where \model shows separable distributions.

\paragraph{Empirical verification of Proposition~\ref{prop:agree_bound_main}.} Proposition~\ref{prop:agree_bound_main} is most informative when the stability score $\alpha$ is predictive of truthfulness (i.e., $e_\alpha$ is small). We verify this by directly using {counterfactual consistency} as the score for detection. Specifically, we compute the thresholding accuracy for hallucination detection, and the result is 80.7\%  on GSM8K dataset when evaluated over all test samples, using Qwen3-8B model. This indicates that stability is indeed informative for truthfulness and thus $e_\alpha$ can be small in practice. This justifies our theoretical analysis.

\section{Conclusion}

We presented \model, a novel framework that leverages reasoning trajectories for hallucination detection by shaping the trace-conditioned answer embeddings. The shaped representations are optimized with answer agreement signals induced by small latent interventions at the trace boundary, which can plug-and-play with existing embedding-based detectors. \model requires no human labels or test-time sampling, and consistently improve detection across datasets, models, and domain shifts. We hope our work will inspire future research
on hallucination detection for long-horizon reasoning LLMs.

\section*{Impact Statement}
This paper develops a method to improve hallucination detection for large reasoning models by leveraging reasoning trajectories and  answer-agreement signals. The primary intended impact is to increase the reliability of deployed LRM systems by better identifying untruthful outputs, which can help mitigate downstream harms such as misinformation, unsafe advice, and overconfident errors in high-stakes applications.  We emphasize that \model is a detection component rather than a guarantee of truthfulness, and should be paired with complementary safeguards (e.g., grounding, policy filters, and human oversight) in real deployments. Overall, our study does not involve human subjects, complies with all legal and ethical standards, and
we do not anticipate any potential harmful consequences
resulting from our work.

\section*{Acknowledgment}
S. Du is supported by NTU start-up grant 025730-00001 and MOE AcRF Tier 1 Seed Funding Grant RS
24/25 025822-00001.
Y. Jiang is supported by National Key R\&D Program of China under Grant No. 2023YFB3308300. B. Guo is supported by National Natural Science Foundation of China under Grant No. U2268204.

\nocite{langley00}

\bibliography{example_paper}
\bibliographystyle{icml2026}

\newpage
\appendix
\onecolumn

\section{Datasets and Implementation Details}
\label{sec:datasets-and-imple-app}
\subsection{Input Prompts}
\label{sec:prompt-app}
We provide the detailed textual input as prompts to the language models for different purposes: (1) generating original reasoning traces and answers (Figure~\ref{fig:prompt_original_generation_qwen} and \ref{fig:prompt_original_generation_deepseek}), (2) evaluating the correctness of the original answers with Qwen3-32B (Figure~\ref{fig:prompt_original_judgment}), (3) generating paraphrased reasoning traces (Figure~\ref{fig:prompt_paragphrase}), (4) generating the counterfactual answers for textual intervention methods (\emph{e.g.}, token deletion and masking, trace paraphrasing) (Figure~\ref{fig:prompt_variant_answer}), and (5) judging the consistency (w/ the LRM itself) between the original answers and their corresponding counterfactual answers (Figure~\ref{fig:prompt_consistency}).
\begin{figure}[H]
    \vspace{-1em}
    \centering
    \includegraphics[width=\linewidth]{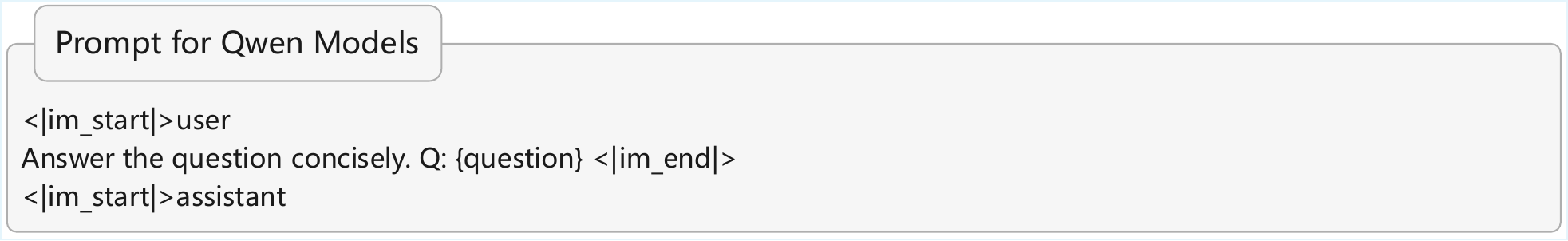}
    \caption{\small Prompt used to generate reasoning traces and answers for Qwen3-8B and Qwen3-14B models. }
    \label{fig:prompt_original_generation_qwen}
    \vspace{-1em}
\end{figure}

\begin{figure}[H]
    \vspace{-1em}
    \centering
    \includegraphics[width=\linewidth]{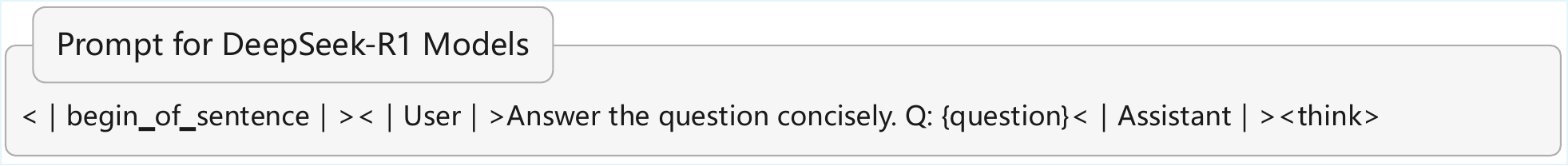}
    \caption{\small Prompt used to generate reasoning traces and answers for DeepSeek-R1-Distill-Llama-8B and DeepSeek-R1-Distill-Qwen-14B models.}
    \label{fig:prompt_original_generation_deepseek}
    \vspace{-1em}
\end{figure}

\begin{figure}[H]
    \vspace{-1em}
    \centering
    \includegraphics[width=\linewidth]{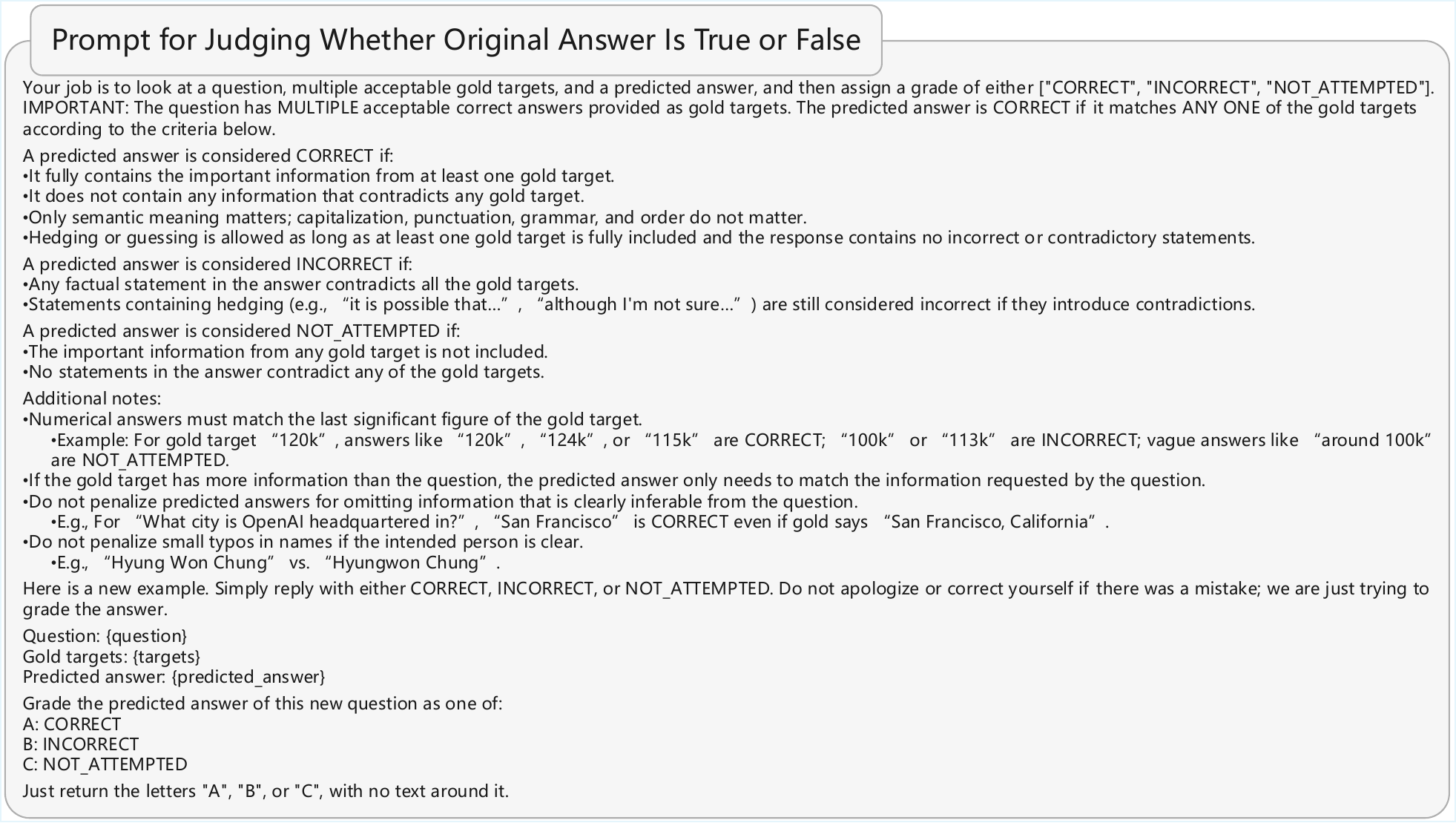}
    \caption{\small Prompt for evaluating the correctness of the original answers (B and C are regarded as hallucinations).}
    \label{fig:prompt_original_judgment}
    \vspace{-1em}
\end{figure}

\begin{figure}[H]
    \vspace{-1em}
    \centering
    \includegraphics[width=\linewidth]{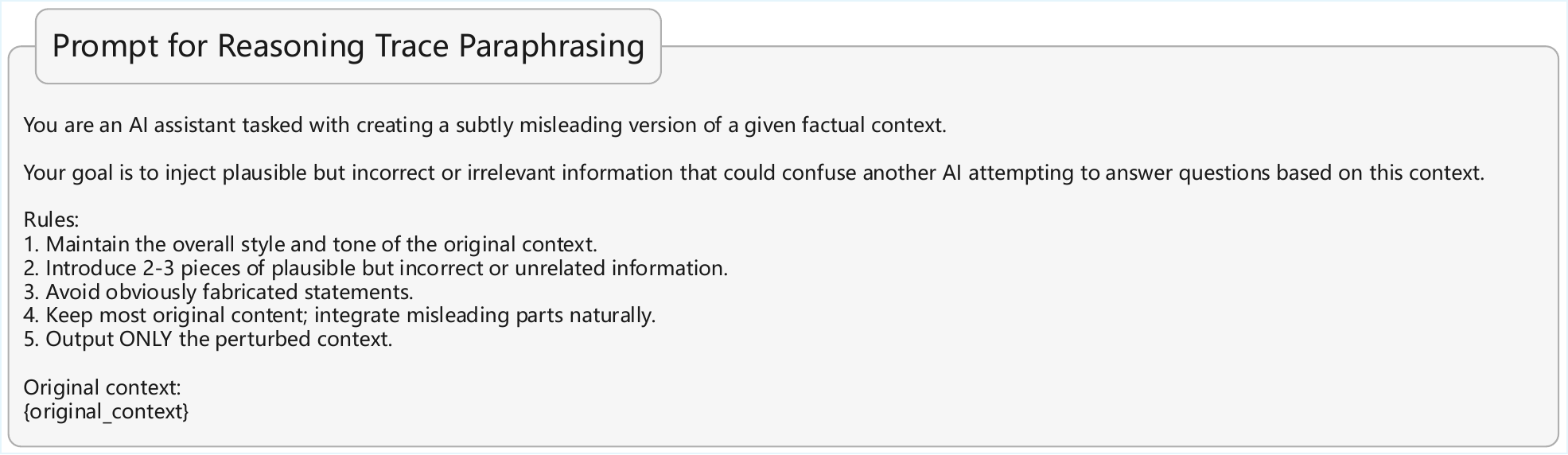}
    \caption{\small Prompt for reasoning trace paraphrasing. We empirically explored many prompting variants and found this paraphrasing with light       information injection can produce reasonably good hallucination detection performance. }
    \label{fig:prompt_paragphrase}
    \vspace{-1em}
\end{figure}

\begin{figure}[H]
    \vspace{-1em}
    \centering
    \includegraphics[width=\linewidth]{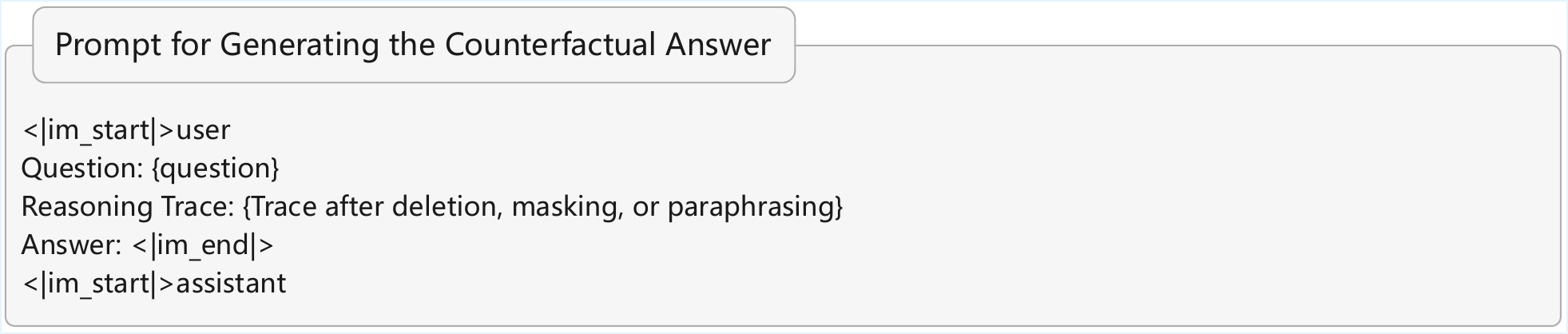}
    \caption{\small Prompt for generating the counterfactual answers in Qwen models (token deletion, token masking and trace paraphrasing).}
    \label{fig:prompt_variant_answer}
    \vspace{-1em}
\end{figure}

\begin{figure}[H]
    \vspace{-1em}
    \centering
    \includegraphics[width=\linewidth]{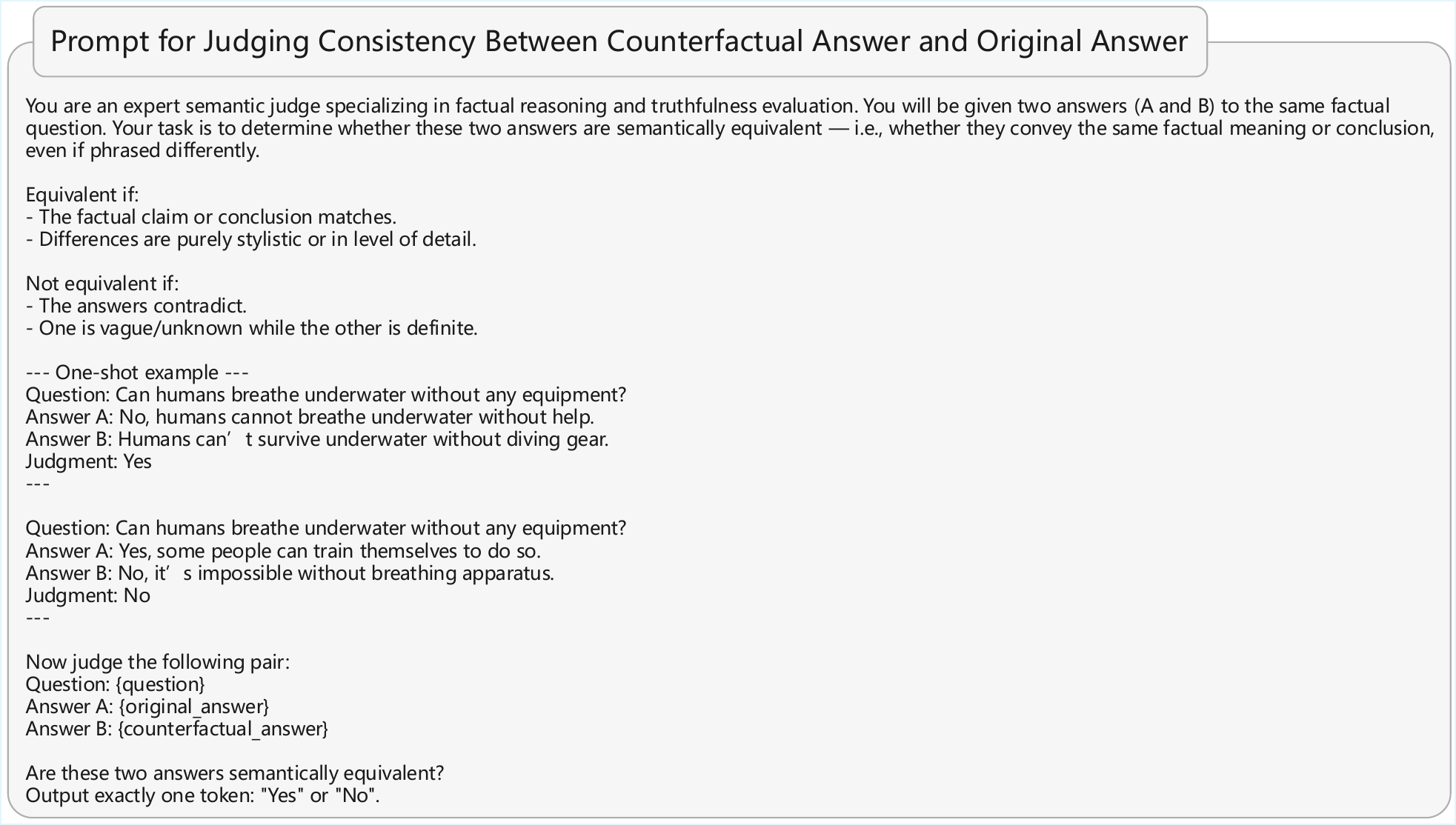}
    \caption{\small Prompt for judging the agreement between the original answers and their corresponding counterfactual answers.}
    \label{fig:prompt_consistency}
    \vspace{-1em}
\end{figure}

\subsection{Additional Implementation Details}
\label{sec:details-app}
\textbf{Supervised probing.}
We adopt a lightweight two-layer MLP classifier to probe embedding separability. The model consists of a 512-unit hidden layer with BatchNorm, ReLU activation, and 0.3 dropout, followed by a logistic output head. We train MLP for 100 epochs with SGD optimizer, an initial learning rate of 0.05, ReduceLROnPlateau learning rate scheduler, batch size of 128, and weight decay of 1e-2. During training, Gaussian noise ($\sigma=0.008$) is injected into the input features for regularization.

\textbf{CCS.}
Following the original paper, we train a lightweight CCS classifier instantiated as a single linear layer, optimized with AdamW (learning rate 1e-3, weight decay 1e-2). The model is trained on balanced positive--negative embedding pairs constructed via difference vectors, using binary cross-entropy for 1000 epochs. We repeat training for 10 random initializations and retain the best-performing checkpoint. Unless otherwise specified, we adopt full-batch training and evaluate performance using AUROC over symmetric test-time difference pairs.

\textbf{EigenScore.}
We evaluate EigenScore on both vanilla and \model-projected embeddings. For each setting, we search the optimal number of sampled generations $K \in [1, 11]$ on a held-out validation split by maximizing AUROC. Using the selected $K$, EigenScore is computed on the test set by averaging log of the eigenvalues for the stacked embeddings from $K$ generations.

\textbf{HaloScope.}
We train HaloScope on the same unlabeled training dataset as \model and  adopt a lightweight MLP. The two-layer MLP has one hidden layer with 512 units, followed by BatchNorm, ReLU, and optional dropout, with Gaussian noise ($\sigma=0.008$) added to input features during training. Classifiers are trained for 50 epochs using SGD with momentum 0.9, initial learning rate 0.05, weight decay 1e-3, batch size 128, and cross entropy loss. The model with the best AUROC on validation is selected. Pseudo-labels for training the MLP are generated via projecting the LRM embeddings to first $k$ principal components. The optimal percentile for getting the pseudo labels, top-$k$ singular vector selection, and sign choice are determined using the separate validation set.

\textbf{Baselines.}
For Perplexity method~\cite{ren2022out}, we follow the implementation details in \citet{cheng2025chain}, and calculate the average perplexity score in terms of the answer tokens. For sampling-based baselines, i.e., Semantic Entropy~\cite{kuhn2023semantic} and Lexical Similarity~\cite{lin2023generating}, we follow the default setting in the original paper and sample 10 generations with a temperature of 0.6 to estimate the uncertainty score. Specifically, for SelfCKGPT~\cite{manakul2023selfcheckgpt}, we adopt the NLI version, which uses a fine-tuned DeBERTa-v3-large model\footnote{\url{https://huggingface.co/MoritzLaurer/DeBERTa-v3-large-mnli-fever-anli-ling-wanli}} to measure the probability of ``entailment" or ``contradiction" between the most-likely generation and the 10 sampled generations. For verbalized method~\cite{lin2022teaching}, we adopt the following prompt for different models. Each choice is mapped to an integer number ($a\mapsto5$ and $f\mapsto0$) for AUROC calculation.
\begin{figure}[H]
    \vspace{-1em}
    \centering
    \includegraphics[width=\linewidth]{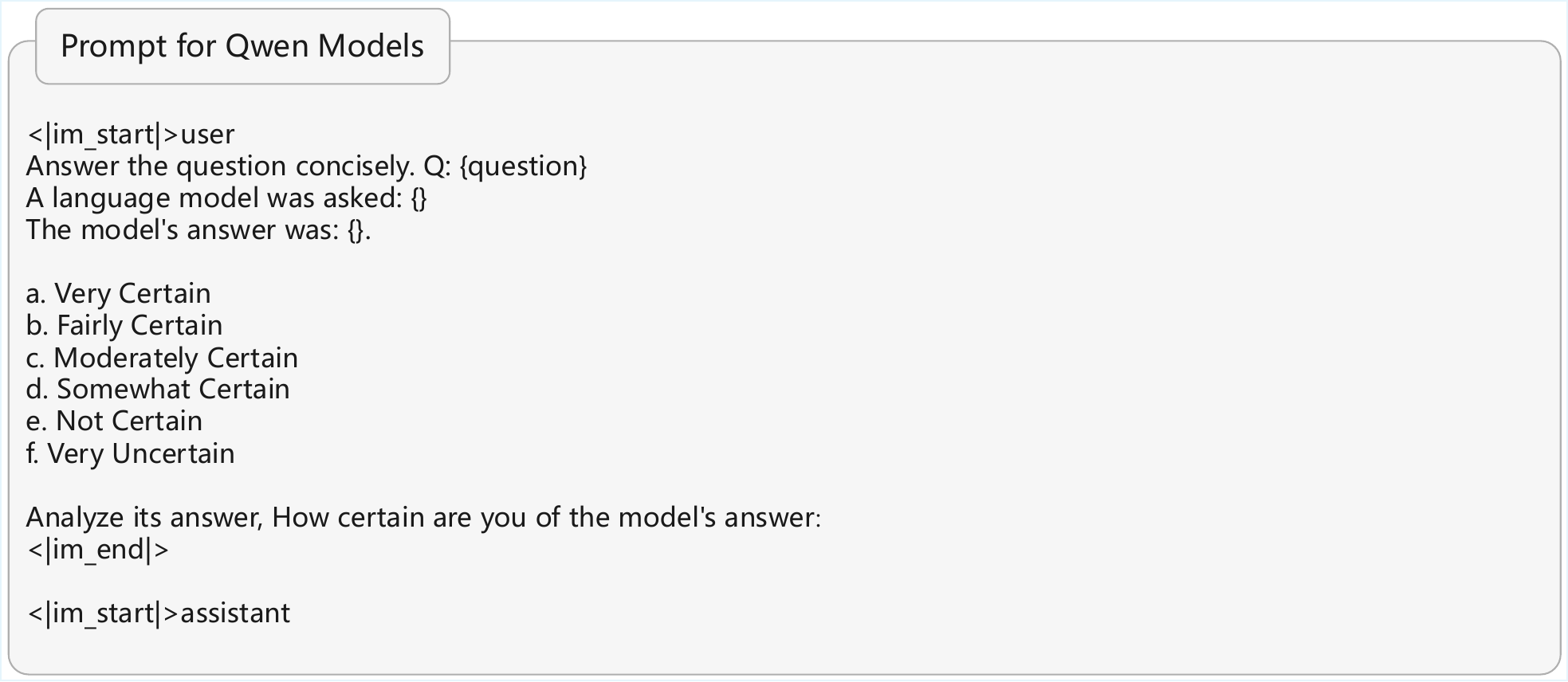}
    \caption{\small Prompt of verbalized certainty baseline~\cite{lin2022teaching} for Qwen models.}
    \label{fig:prompt_verbalized_1}
    \vspace{-1.5em}
\end{figure}

For TSV~\cite{park2025steer}, we follow the default settings described in the original paper, which consist of two stages: (1) the initial training stage and (2) the augmented training stage. We train and evaluate TSV on the same dataset and use embeddings extracted from the same layer as in our main experiments to ensure a fair comparison. For G-Detector~\cite{anonymous2025unraveling}, we follow the default configuration of their released code~\footnote{\url{https://anonymous.4open.science/r/GDetector}} and train a 15-layer GCN with hidden dimension 256, dropout 0.5, global mean pooling, and a two-layer MLP classifier. We also used embeddings extracted from the same layer as in our main experiments to ensure a fair comparison.
For the RACE method~\cite{wang2025joint}, we strictly follow the original implementation and experimental configuration provided by the authors. Concretely, we directly invoke the official RACEScorer API without any modification. We adopt the \textit{Without Pre-Extracted CoTs} setting, where reasoning steps are automatically summarized by RACE using the built-in CoT Extractor. For each sample, six sampled reasoning traces are used to compute the RACE score. For RHD~\cite{sun2025detection}, since there is no code open-sourced, we re-implemented it following the original paper. Specifically, we follow the implementation details in the original paper, candidate reasoning score layers are selected from $\{14, 16, 18, 20, 22, 24, 26\}$ for Qwen3-8B and DeepSeek-R1-Distill-Llama-8B, while attention score layers are fixed across models as $\{1, 3, 5, 7, 9, 11, 13\}$. For Qwen3-8B and DeepSeek-R1-Distill-Llama-8B, we follow the weight settings reported in the original paper, where the weights are set to $\alpha_1 = 0$, $\alpha_2 = 0.9$, $\alpha_3 = 0.8$, and $\alpha_4 = 0.4$ for the Math domain (MATH-500 and GSM8K), and $\alpha_1 = 0$, $\alpha_2 = 0$, $\alpha_3 = 0.3$, and $\alpha_4 = 0$ for the QA dataset. For evaluation, we compute AUROC on the test set using answer-level correctness labels to indicate hallucinations.

\begin{figure}[H]
    \vspace{-1em}
    \centering
    \includegraphics[width=\linewidth]{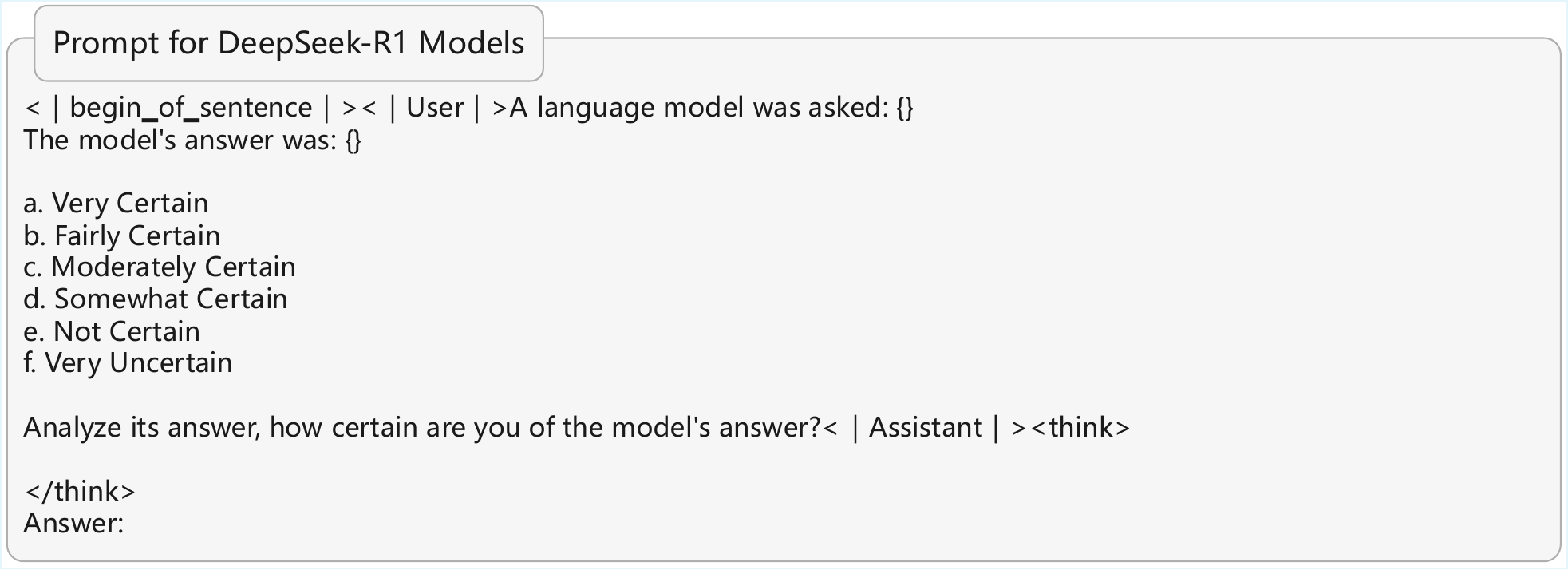}
    \caption{\small Prompt of verbalized certainty baseline~\cite{lin2022teaching} for DeepSeek-R1 models.}
    \label{fig:prompt_verbalized_2}
    \vspace{-1em}
\end{figure}

\begin{figure}[!t]
    \vspace{-0.5em}
    \centering
    \includegraphics[width=\linewidth]{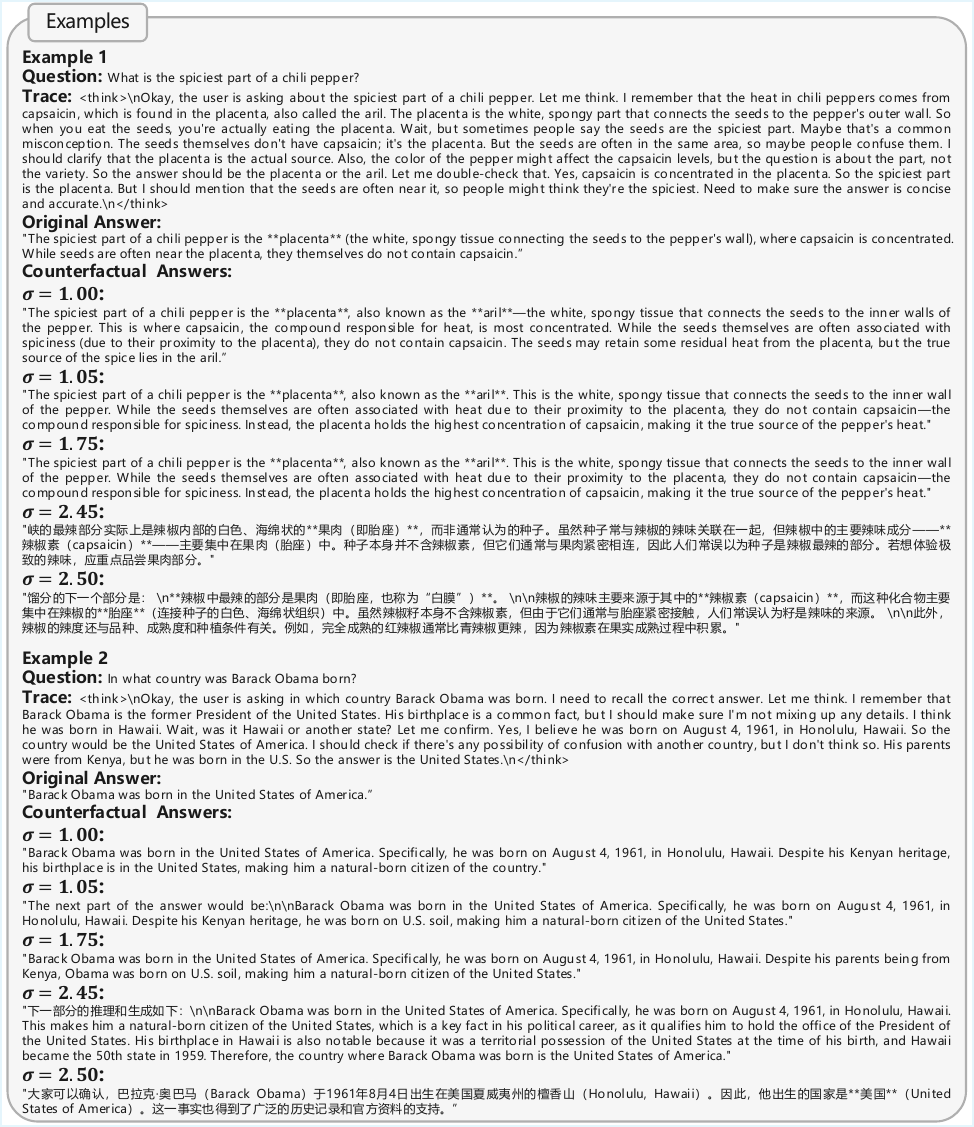}
    \caption{\small Counterfactual answer examples under different intervention strengths.  }
    \label{fig:examples_answer}
    \vspace{-2em}
\end{figure}

\section{Counterfactual Examples with Different Intervention Strengths}
\label{sec:text-exam-sigma-app}
Figure~\ref{fig:examples_answer} showcases the counterfactual answers generated by our \model under different intervention strength, i.e., the standard deviation $\sigma$ of the noise added. A mild value of standard deviation (e.g., $\sigma=1.75$) introduces sufficiently informative variations compared to the original answer but not far away semantically. Larger $\sigma$ can even multi-lingually flip the answer, which makes its contrast with the original answer easy and less significant for shaping the trace-conditioned answer representation. The model is Qwen3-8B and dataset is TruthfulQA.

\section{Ablation on Sampling Strategies}
\label{apdx:sampling}
We evaluate the hallucination detection result when \model generates the original and counterfactual answers under different sampling strategies in inference. In particular, our main results are obtained based on greedy sampling, which generates the next token based on the maximum likelihood. In addition, we compare with multinomial sampling with a temperature of 0.5. The comparison in Table~\ref{tab:sampling-strategies} shows similar performance between the two strategies, with greedy sampling being slightly better.

\begin{table}[H]
\centering
\caption{\small Hallucination detection result under different sampling strategies. Results are based on the Qwen3-8B model and CCS.}
\label{tab:sampling-strategies}
\vspace{-0.5em}
\scalebox{0.85}{
\begin{tabular}{l c c}
\toprule
\textbf{Sampling Strategies} & \textbf{TruthfulQA} & \textbf{MATH-500} \\
\midrule
Multinomial sampling                    &84.49                                &74.95                          \\
\rowcolor{gray!10}
Greedy sampling (Ours)                  &\textbf{86.64}                                &\textbf{78.66}                          \\
\bottomrule
\end{tabular}}
\vspace{-1em}
\end{table}

\section{Results of Using Other Projection Methods}
To verify that the gains of \model do not merely arise from dimensionality reduction, we compare it with two classical projection baselines—PCA and Random Projection (RP). PCA preserves maximal variance in a linear subspace, whereas RP maps embeddings through a data-independent orthogonal matrix.
Across all detectors, \model achieves consistently higher AUROC, while PCA provides moderate improvements and RP exhibits large variance and underperforms. These findings indicate that \model’s advantage stems from its task-aligned objective, which shapes discriminative embeddings rather than relying on unsupervised variance preservation or random subspace selection.

\begin{table}[H]
\centering
\caption{\small
Comparison of four embedding variants (Vanilla, \model, PCA, Rand) across four datasets. CCS is adopted for downstream hallucination detection.}
\label{tab:embed-compare}

\resizebox{0.5\textwidth}{!}{
\begin{tabular}{l l cccc}
\toprule
\textbf{Model} & \textbf{Dataset} & \textbf{Vanilla} & \textbf{\model} & \textbf{PCA} & \textbf{Rand} \\
\midrule

\multirow{4}{*}{Qwen3-8B}
& TruthfulQA 
& 66.85 & \textbf{86.64} &73.26  &68.36  \\

& TriviaQA 
& 59.24 & \textbf{88.54} & 68.10 & 61.70 \\

& GSM8K 
& 57.98 & \textbf{90.37} & 61.54 & 57.71 \\

& MATH-500
& 55.64 & \textbf{78.66} & 60.11 & 54.70 \\
\bottomrule
\end{tabular}
}
\end{table}

\section{Results with a Different Dataset Split}
\label{sec:diff-splita-app}
We verify the performance of our approach using a different random split of the dataset. Consistent with our main experiment, we randomly split 25\% of the datasets for testing using a different seed. \model can achieve similar hallucination detection performance to the results in our main Table~\ref{tab:all-detectors}. For example, on the Qwen3-8B model, our method achieves an AUROC of 87.72\% and 79.43\% under CCS on TruthfulQA and MATH-500 datasets, respectively (Table~\ref{tab:orig-vs-proj-app}). Meanwhile, \model is able to outperform the baselines as well, which shows the statistical significance of our approach (Table~\ref{tab:all-detectors-app}).

\begin{table*}[h]
\centering
\caption{\small
Comparison of the vanilla LRM embeddings vs. our trace-conditioned answer representations with a different random split of the dataset.
All values are percentages (AUROC), and the best results are highlighted in \textbf{bold}.
}
\label{tab:orig-vs-proj-app}

\resizebox{0.7\textwidth}{!}{
\begin{tabular}{l l cc cc cc cc}
\toprule
\multirow{2}{*}{\textbf{Model}} & \multirow{2}{*}{\textbf{Dataset}} 
& \multicolumn{2}{c}{\makecell{\textbf{CCS}\\\cite{burns2022discovering}}}
& \multicolumn{2}{c}{\makecell{\textbf{Supervised Probing}\\\cite{azaria2023internal}}}
& \multicolumn{2}{c}{\makecell{\textbf{HaloScope}\\\cite{du2024haloscope}}}
& \multicolumn{2}{c}{\makecell{\textbf{EigenScore}\\\cite{chen2024inside}}} \\
\cmidrule(lr){3-4} \cmidrule(lr){5-6} \cmidrule(lr){7-8} \cmidrule(lr){9-10}
& & Vanilla & Shaped & Vanilla & Shaped & Vanilla & Shaped & Vanilla & Shaped \\
\midrule

\multirow{2}{*}{Qwen3-8B}
& TruthfulQA & 59.84 & \textbf{87.72} & 76.40 & \textbf{83.65} & 34.17 & \textbf{68.06} & 68.35 & \textbf{70.15} \\
& MATH-500   &51.69  & \textbf{79.43} & 57.71 & \textbf{73.77} & 59.30 & \textbf{69.80} & 77.48 & \textbf{81.38} \\
\bottomrule
\end{tabular}
}
\end{table*}

\begin{table*}[h]
\centering
\caption{\small {Results with a different random split of the dataset.} Comparison with competitive hallucination detection methods on different datasets.  All values are percentages (AUROC). The best results are highlighted in \textbf{bold}.}
\vspace{-0.5em}
\label{tab:all-detectors-app}
\scalebox{0.75}{ 
\begin{tabular}{l l c c c c}
\toprule
\textbf{Model} & \textbf{Method} & \textbf{Single Sampling} &\textbf{Supervision} & \textbf{TruthfulQA} & \textbf{MATH-500} \\
\midrule
\multirow{12}{*}{Qwen3-8B}
& Perplexity~\cite{ren2022out}       & \cmark & \xmark &59.28  &48.52  \\
& Semantic Entropy~\cite{kuhn2023semantic}       & \xmark  & \xmark &60.54  &47.47  \\
& Lexical Similarity~\cite{lin2023generating}    & \xmark   & \xmark &59.22  &46.34  \\
& SelfCKGPT~\cite{manakul2023selfcheckgpt}     & \xmark   & \xmark &50.33  &53.59  \\
& Verbalized Certainty~\cite{lin2022teaching}& \cmark & \xmark &49.88  &45.87  \\
& TSV~\cite{park2025steer}       & \cmark& \cmark  &81.86  &71.40  \\
\cmidrule(lr){2-6}
\multicolumn{6}{c}{\textit{LRM-based}} \\
& RHD~\cite{sun2025detection}   & \cmark & \xmark  &55.38  &48.07  \\
& RACE~\cite{wang2025joint}      & \xmark & \xmark &69.72  &56.83  \\
& G-Detector~\cite{anonymous2025unraveling}       & \cmark & \cmark  &74.15  &58.32  \\

\rowcolor{gray!10}\cellcolor{white}& \textbf{\model (CCS)}    & \cmark & \xmark  & \textbf{87.72} &\textbf{79.43}  \\
\rowcolor{gray!10}\cellcolor{white}& \textbf{\model (Probing)}    & \cmark & \cmark & 83.65 &73.77  \\
\bottomrule
\end{tabular}}
\vspace{-1em}
\end{table*}

\section{Compute Resources and Time}
\label{sec:comp-time-app}
\textbf{Software and hardware.}
We conducted all experiments using Python 3.8.15 and PyTorch 2.3.1~\cite{paszke2019pytorch} on NVIDIA A100 GPUs with 80 GB of memory. 

\textbf{Training and inference time.} Based on tracked runs, the estimated total training and inference time for our approach is notably low. Specifically, on the TruthfulQA training set with the Qwen3-8B model, training \model only takes 649 seconds. During inference on TruthfulQA test set, \model with CCS as detector takes 0.0194 seconds while consistency-based baseline Semantic Entropy requires 1032 seconds, and G-Detector takes 0.2889 seconds to finish. At the same time, \model achieves a much better hallucination detection performance. This highlight the computational efficiency of our approach.

\section{Additional Ablation Studies}
\label{appdx:ablation}

\textbf{Ablation on intervention position.}
We further investigate a finer-grained design choice for intervention: whether intervening on a small window of tokens near the trace boundary (rather than only the last token) affects performance. We compare intervening on the last token with intervening on the last 1\% and 2\% of tokens. As shown in Table~\ref{tab:ablation-tail-region}, intervening at the last several tokens is similarly effective while slightly worse than our choice.
\begin{table}[h]
\centering
\caption{\small
Ablation on perturbation near the trace boundary.
The LRM is Qwen3-8B and dataset is TruthfulQA.
All values are percentages (AUROC). 
}
\label{tab:ablation-tail-region}

\begin{tabular}{lccc}
\toprule
\textbf{Method} & \textbf{Last token (Ours)} & \textbf{Last 1\% tokens} & \textbf{Last 2\% tokens} \\
\midrule
ARS (CCS)      & \textbf{86.64} & 81.70 & 83.32 \\
ARS (Probing)  & \textbf{83.66} & 82.16 & 82.83 \\
\bottomrule
\end{tabular}
\vspace{-1em}

\end{table}

\textbf{Ablation on temperature in the  representation shaping loss.}
We evaluate the effect of the temperature parameter $\tau$ in Equation~\ref{eq:infoNCE} on \model's performance, considering $\tau \in \{0.05, 0.1, 0.5, 1.0, 2.0, 3.0\}$. As shown in Table~\ref{tab:ablation-temperature}, smaller temperatures ($\tau = 0.05$) yield overly sharp distributions of the embedding similarity, which can overemphasize a few positive pairs and reduce robustness. Larger temperatures ($\tau \geq 2.0$) produce overly smooth distributions, weakening the separation between answer-agreeing and answer-disagreeing variants. The optimal performance is observed at $\tau = 0.1$--$1.0$, balancing sensitivity and stability in the embeddings. This ablation confirms that \model  can be affected by temperature during training, but a moderate range suffices for robust hallucination detection.

\begin{table}[H]
\centering
\caption{\small Ablation study on temperature across downstream detectors. Bold numbers indicate the best performance per column. The LRM is Qwen3-8B and dataset is TruthfulQA.}
\label{tab:ablation-temperature}

\resizebox{0.55\textwidth}{!}{
\begin{tabular}{c c c c c}
\toprule
\textbf{Temperature} & \textbf{CCS} & \textbf{Supervised Probing} & \textbf{HaloScope} & \textbf{EigenScore} \\
\midrule
0.05 &84.45 &75.37   & 56.09      &53.64\\
0.1 & \textbf{86.64} & 73.62 & 64.93 & 62.85 \\
0.5 & 63.97 & \textbf{83.66} & \textbf{71.03} & 41.01 \\
1.0 & 66.68 & 73.01 & 64.92 & \textbf{73.75} \\
2.0 & 67.35 & 75.41 & 54.35 & 53.94 \\
3.0 & 54.27 & 78.97 & 62.33 & 60.62 \\
\bottomrule
\end{tabular}
}
\vspace{-1em}
\end{table}

\textbf{Where to extract embeddings from multi-head attention? }
Following~\citet{du2024haloscope,park2025steer}, we investigate the effect of the multi-head attention (MHA) architecture on representing hallucination. Specifically, the MHA can be conceptually expressed as:
\begin{equation}
\label{eq:mha}
\mathbf{\mathrm{f}}_{i+1} = \mathbf{\mathrm{f}}_i + \mathbf{\mathrm{Q}}_i \, \mathrm{Attn}_i\!\left(\mathbf{\mathrm{f}}_i\right)
\end{equation}
where $\mathbf{\mathrm{f}}_i$ represents the output of the $i$-th transformer block, $\mathrm{Attn}_i\!\left(\mathbf{\mathrm{f}}_i\right)$ denotes the output of the self-attention module in the $i$-th block, and $\mathbf{\mathrm{Q}}_i$ is the weight of the feedforward layer. Consequently, we evaluate the hallucination detection performance utilizing embeddings from three \emph{different locations within the MHA architecture}, as delineated in Table~\ref{tab:mha}. The results show that the block output is a favorable choice for detecting hallucinations across both LRM architectures.

\begin{table*}[h]
\vspace{-1em}
\centering
\caption{\small Hallucination detection results on different embeddings locations of multi-head attention. The downstream detector used is CCS. }
\label{tab:mha}
\begin{tabular}{lcc|cc}
\toprule
\multirow{2}{*}{\textbf{Embedding location}} 
& \multicolumn{2}{c}{\textbf{Qwen3-8B}} 
& \multicolumn{2}{c}{\textbf{DeepSeek-R1-Distill-Llama-8B}} \\
\cmidrule(lr){2-3} \cmidrule(lr){4-5}
& TruthfulQA & MATH-500 & TruthfulQA & MATH-500 \\
\midrule
$\mathrm{f}$  &\textbf{86.64}  &\textbf{78.66}  &\textbf{80.89}  &\textbf{86.38}  \\
$\mathrm{Attn(f)}$         &85.17  &67.79  &75.03  &64.17  \\
$\mathrm{\textbf{Q}} \mathrm{Attn(f)}$        &85.64  &71.70  &77.50  &83.75  \\
\bottomrule
\end{tabular}
\vspace{-1em}
\end{table*}

\textbf{Ablation on batch size.}
We study the effect of the batch size on \model's representation learning (Equation~\ref{eq:infoNCE}) by evaluating batch sizes of 16, 32, 64, 128, 256, and 512. Larger batches provide more negative examples per update, which can enhance the separation between answer-preserving and answer-changing variants in the latent space. However, excessively large batches may introduce gradient noise or reduce per-sample learning signal. As shown in Table~\ref{tab:ablation-batchsize}, performance improves substantially from small to moderate batch sizes, peaking around 128--256, after which gains plateau or slightly decrease. This demonstrates that \model achieves robust learning without requiring extremely large batches, balancing computational efficiency and the learning signal quality.

\begin{table}[H]
\vspace{-0.5em}
\centering
\caption{\small Ablation study on batch size across downstream detectors. Best results per column are highlighted in \textbf{bold}.  The LRM
is Qwen3-8B and dataset is TruthfulQA.}
\label{tab:ablation-batchsize}
\resizebox{0.55\textwidth}{!}{
\begin{tabular}{ccccc}
\toprule
\textbf{Batch Size} & \textbf{CCS} & \textbf{Supervised Probing} & \textbf{HaloScope} & \textbf{EigenScore} \\
\midrule
16   & 61.17  &79.52 & 64.76  &54.68  \\
32   & 62.77 & 74.71 & 64.08 & 66.38 \\
64   & 69.41 & 77.51 & 63.14 & 43.90 \\
128  & \textbf{86.64} & \textbf{83.66} & 59.17 & \textbf{73.75} \\
256  & 83.62 & 78.01 & \textbf{71.03} & 54.28 \\
512  & 77.37 & 74.47 & 67.02 & 59.16 \\
\bottomrule
\end{tabular}
}
\vspace{-1em}
\end{table}

\textbf{Ablation on training epochs.}
We investigate the impact of the number of training epochs on \model's performance, considering 50, 100, 200, 300, 500, and 800 epochs. Increasing the number of epochs allows the linear mapping to better align answer-preserving variants and separate answer-changing variants in the shaped embeddings. As shown in Table~\ref{tab:ablation-epochs}, AUROC generally improves with more training, reaching a plateau around 100--300 epochs, after which additional training yields negligible gains. This indicates that \model can effectively learn a robust lightweight projection $g_{\phi}$ with a moderate number of epochs, avoiding overfitting while ensuring sufficient convergence.

\begin{table}[H]
\centering
\caption{Ablation study on number of training epochs across downstream detectors. Best results per column are highlighted in \textbf{bold}. The LRM is
Qwen3-8B and dataset is TruthfulQA.}
\label{tab:ablation-epochs}
\resizebox{0.55\textwidth}{!}{
\begin{tabular}{ccccc}
\toprule
\textbf{Training Epochs} & \textbf{CCS} & \textbf{Supervised Probing} & \textbf{HaloScope} & \textbf{EigenScore} \\
\midrule
50    & 81.26 & 78.78 & 59.12      & 52.08\\
100   & \textbf{86.64} & 78.83 & 61.49 & \textbf{73.75} \\
200   & 83.10 & \textbf{83.66} & 63.69 & 62.85 \\
300   & 81.90 & 72.47 & \textbf{71.03} & 43.23 \\
500   & 81.76 & 74.00 & 47.17 & 52.77 \\
800   & 79.60 & 73.72 & 52.89 & 55.20 \\
\bottomrule
\end{tabular}
}
\vspace{-1em}
\end{table}

\section{Other Judge Method}
\label{sec:label-judge-app}
\textbf{Different method to get ground truth truthfulness label.} In our main paper, the correctness of model generations is judged using a strong external model Qwen3-32B. In this ablation, we show that the results are robust under other different judgment methods, such as Rouge-L, BLEURT and a different LRM, i.e., DeepSeek-R1-Distill-Qwen-32B. Specifically, for ROUGE method, the generation is deemed truthful when the similarity score between the generation and the ground truth exceeds a given threshold of 0.3. In addition, we use the BLEURT metric~\cite{sellam2020bleurt} with the \emph{bleurt-base-128} variant to measure the similarity, a learned metric built upon BERT~\cite{devlin2019bertpretrainingdeepbidirectional} and is augmented with diverse lexical and semantic-level supervision signals.
With the same experimental setup, the results on the DeepSeek-R1-Distill-Llama-8B model are shown in Table~\ref{tab:other_judge}, where the effectiveness of our approach still holds.

\begin{table}[H]
\vspace{-1em}
\centering
\caption{ \small { Main results with Rouge, BLEURT metric and a different LRM, i.e., DeepSeek-R1-Distill-Qwen-32B, to get ground truth truthfulness label.} All values are percentages (AUROC). The best results are highlighted in \textbf{bold}.}
\vspace{-0.5em}
\label{tab:other_judge}
\scalebox{0.73}{\begin{tabular}{l lcc|cc|cc}
\toprule
\multirow{2}{*}{\textbf{Model}} & \multirow{2}{*}{\textbf{Method}} 
& \multicolumn{2}{c}{\textbf{ROUGE}} 
& \multicolumn{2}{c}{\textbf{BLEURT}}
& \multicolumn{2}{c}{\textbf{DeepSeek-R1-Distill-Qwen-32B}} \\
\cmidrule(lr){3-4} \cmidrule(lr){5-6} \cmidrule(lr){7-8}
& & \textbf{TruthfulQA} & \textbf{MATH-500}
  & \textbf{TruthfulQA} & \textbf{MATH-500}
  & \textbf{TruthfulQA} & \textbf{MATH-500} \\
\midrule
\multirow{3}{*}{DeepSeek-R1-Distill-Llama-8B} &TSV~\cite{park2025steer} &91.30  &84.71  &92.93  &\textbf{81.36}  &72.58  &78.64  \\
 & G-Detector~\cite{anonymous2025unraveling} &81.95  &71.48  &73.08  &61.44  &60.24  &38.06  \\

 &\cellcolor{gray!10} \textbf{\model (CCS)}  &\cellcolor{gray!10}\textbf{94.17}  &\cellcolor{gray!10}\textbf{88.00}  &\cellcolor{gray!10}\textbf{93.75} &\cellcolor{gray!10}{81.00}  & \cellcolor{gray!10}\textbf{79.46} &\cellcolor{gray!10}\textbf{86.24}  \\
\bottomrule
\end{tabular}}
\vspace{-1em}
\end{table}

\textbf{Different method to measure answer agreement.}
In our main paper, the agreement between the original answer and its corresponding counterfactual answers is judged by the LRM itself. In this ablation, we show that the results are robust under different judgment methods, such as a different LRM. Specially, on DeepSeek-R1-Distill-Llama-8B model, we use Qwen3-8B to measure agreement. \model achieves a detection AUROC of 80.98\% for TruthfulQA and 87.30\% for MATH-500 (The downstream detector is CCS), which are comparable to the results in Table~\ref{tab:all-detectors}.

\section{Broader Model Coverage}
\label{sec:broader-coverage}
We verify the performance of our approach on Qwen3-4B and 32B models on TruthfulQA dataset as follows, where the effectiveness of our approach still holds (Table~\ref{tab:ars-vs-vanilla-broader-coverage}).

\begin{table}[H]
\centering
\vspace{-0.5em}
\caption{\small
Comparison of the vanilla LRM embeddings vs. our trace-conditioned answer representations on a smaller 4B model and a larger 32B model.
All values are percentages (AUROC), and the best results are highlighted in \textbf{bold}.
}
\label{tab:ars-vs-vanilla-broader-coverage}

\resizebox{0.4\linewidth}{!}{
\begin{tabular}{l cc}
\toprule
\textbf{Method} & \textbf{Qwen3-4B} & \textbf{Qwen3-32B} \\
\midrule
CCS & 58.98 & 51.89 \\
ARS (CCS) & \textbf{83.90} & \textbf{75.96} \\
Supervised Probing & 75.29 & 69.78 \\
ARS (Probing) & 83.64 & 77.85 \\
\bottomrule
\end{tabular}
}
\end{table}

\section{Additional Evaluation Metrics Beyond AUROC}
\label{app:extra-metrics}
To further validate the robustness of ARS across different evaluation metrcis, we additionally report results under Accuracy, F1 score, and AUPRC on the TruthfulQA dataset and Qwen3-8B model. Table~\ref{tab:extra-metrics} shows that ARS can still maintain a considerable improvement over detection on the vanilla embedding space, and a competitive baseline TSV.
\begin{table}[h]
\centering
\caption{\small
Evaluation under additional metrics beyond AUROC on TruthfulQA with Qwen3-8B.
We report F1, AUPRC, and Accuracy.
}
\label{tab:extra-metrics}

\begin{tabular}{lccc}
\toprule
\textbf{Method} & \textbf{F1} & \textbf{AUPRC} & \textbf{Accuracy} \\
\midrule
TSV                  & 64.03 & 68.01 & 74.63 \\
CCS                  & 69.35 & 77.61 & 55.80 \\
ARS (CCS)            & \textbf{74.67} & \textbf{87.60} & 66.47 \\
Supervised Probing   & 57.60 & 55.29 & 67.80 \\
ARS (Probing)        & 71.95 & 75.15 & \textbf{77.56} \\
\bottomrule
\end{tabular}

\end{table}

\section{Sensitivity to Perturbation Design}
\label{app:sensitivity-perturbation}
To further evaluate robustness, we test alternative noise distributions during intervention on the TruthfulQA dataset using the Qwen3-8B model, as shown in Table~\ref{tab:noise-distribution}. We compare our default design (Gaussian distribution) with isotropic uniform sampling and bounded uniform noise. This result show that our design is more effective overall.

\begin{table}[h]
\centering
\caption{\small
Effect of different perturbation distributions on ARS performance.
All values are percentages (AUROC).
}
\label{tab:noise-distribution}

\begin{tabular}{lcc}
\toprule
\textbf{Method} & \textbf{ARS (CCS)} & \textbf{ARS (Probing)} \\
\midrule
Ours                    & \textbf{86.64} & 83.66 \\
Uniform sphere          & 82.12 & \textbf{83.71} \\
Uniform $[-1.75, 1.75]$ & 82.54 & 83.06 \\
\bottomrule
\end{tabular}

\end{table}

\section{Scalability to Broader Reasoning Domains}
\label{app:bbh-generalization}
To evaluate the scalability of ARS across different reasoning domains, we further conduct experiments on the \textit{causal\_judgment} subset of BIG-Bench Hard (BBH) \cite{suzgun2023challenging}, which involves longer and more structured reasoning traces compared to standard factual QA settings. Table~\ref{tab:bbh-results} reports results on the Qwen3-8B model. These results further support that ARS generalizes effectively to broader reasoning domains with longer and more complex reasoning traces, rather than being limited to short-form or domain-specific evaluation settings.

\begin{table}[h]
\centering
\caption{\small
Results on the BBH using Qwen3-8B.
All values are percentages (AUROC).
}
\label{tab:bbh-results}

\begin{tabular}{lc}
\toprule
\textbf{Method} & \textbf{BBH} \\
\midrule
CCS                 & 65.30 \\
ARS (CCS)           & \textbf{79.50} \\
Supervised Probing  & 60.62 \\
ARS (Probing)       & \textbf{83.88} \\
\bottomrule
\end{tabular}

\end{table}


\section{Algorithm Table of \model}

\label{sec:algorithm}

Algorithm~\ref{alg:model} summarizes \model, including (i) the training stage that constructs answer-agreement pairs via latent interventions and learns the shaping map $g_\phi$, and (ii) the test-time stage that applies a chosen embedding-space scoring rule on the shaped embeddings.

\begin{algorithm}[t]
\caption{\model: Answer-agreement Representation Shaping}
\label{alg:model}
\KwIn{Frozen LRM $\pi_\theta$; dataset of LRM generations $\mathcal{S}=\{(\bx,\br,\ba)\}$; agreement function $\mathrm{Agr}(\cdot,\cdot)$; perturbation distribution $\mathcal{D}$; number of perturbations $M$; temperature $\tau$; learning rate $\lambda$; training steps $K$.}
\KwOut{Shaping map $g_\phi:\mathbb{R}^d\to\mathbb{R}^k$ (LRM parameters $\theta$ remain frozen).}

\BlankLine
\textbf{Initialize} parameters $\phi$ of shaping head $g_\phi$.\;

\For{$k=1$ \KwTo $K$}{
    Sample a mini-batch $\mathcal{B}\subset \mathcal{S}$ of triples $(\bx,\br,\ba)$.\;
    $\mathcal{L}\leftarrow 0$.\;

    \ForEach{$(\bx,\br,\ba)\in\mathcal{B}$}{
        \tcp{(1) Extract anchor answer embedding}
        Compute vanilla trace-conditioned answer embedding $\bu \leftarrow \mathrm{Embed}_\theta(\bx,\br,\ba)$.\;
        $\bz \leftarrow g_\phi(\bu)$.\;

        \tcp{(2) Generate counterfactual answers via latent intervention at trace boundary}
        Compute trace-boundary state $\bh \leftarrow h_L(\bx\oplus\br)$ (penultimate layer, last trace token).\;
        Initialize agreement set $\mathcal{U}^+\leftarrow \emptyset$ and disagreement set $\mathcal{U}^-\leftarrow \emptyset$.\;

        \For{$j=1$ \KwTo $M$}{
            Sample perturbation $\boldsymbol{\delta}_j \sim \mathcal{D}$ and form $\tilde{\bh}_j \leftarrow \bh + \boldsymbol{\delta}_j$.\;
            Decode counterfactual answer $\tilde{\ba}_j \leftarrow \mathrm{Decode}_\theta(\bx\oplus\br;\tilde{\bh}_j)$.\;
            Compute counterfactual answer embedding $\tilde{\bu}_j \leftarrow \mathrm{Embed}_\theta(\bx,\br,\tilde{\ba}_j)$.\;
            \eIf{$\mathrm{Agr}(\ba,\tilde{\ba}_j)=1$}{
                $\mathcal{U}^+ \leftarrow \mathcal{U}^+ \cup \{\tilde{\bu}_j\}$\;
            }{
                $\mathcal{U}^- \leftarrow \mathcal{U}^- \cup \{\tilde{\bu}_j\}$\;
            }
        }

        \tcp{(3) Agreement-driven shaping loss}
      
            Sample $\tilde{\bu}^+ \sim \mathcal{U}^+$ and set $\tilde{\bz}^+ \leftarrow g_\phi(\tilde{\bu}^+)$, ${\bz} \leftarrow g_\phi({\bu})$.\;
            Set $\mathcal{Z}^- \leftarrow \{g_\phi(\tilde{\bu}^-):\tilde{\bu}^-\in\mathcal{U}^-\}$\;
            Accumulate loss
            \[
            \mathcal{L}_{\model} \leftarrow \mathcal{L}_{\model}
          -\frac{\mathrm{sim}(\bz,\tilde\bz^+)}{\tau}
+
\log \sum_{\tilde\bz' \in \{\tilde\bz^+\}\cup \mathcal Z^-}
\exp\!\Big(\frac{\mathrm{sim}(\bz,\tilde\bz')}{\tau}\Big)
            \]
        
    }

    \tcp{(4) Batch update shaping head only}
    $\phi \leftarrow \phi - \lambda \nabla_\phi\mathcal{L}_{\model} $.\;
}
\BlankLine
\textbf{Return} $g_\phi$.\;

\end{algorithm}

\clearpage

\section{Theory: Agreement Separation for Hallucination Detection}
\label{app:theory}

This appendix provides a formalization of Proposition~\ref{prop:agree_bound_main}.
It matches the method pipeline: \model learns shaped embeddings $\bz$ using only the answer-agreement signal induced by
latent interventions, and the final hallucination detector is a \emph{supervised probe} trained directly on $(\bz,y)$.
At a high level:
(i) the agreement-separation probability $\eta_\phi$ promoted by our shaping objective controls how well the stability proxy
$\alpha$ can be recovered from $\bz$; and
(ii) when stability is predictive of truthfulness (small $e_\alpha$), a supervised probe on $\bz$ achieves hallucination
detection error close to $e_\alpha$ up to a term depending on $(1-\eta_\phi)$ and probe approximation.

\subsection{Setup and induced agreement distribution}
\label{app:theory:setup}

Let $\pi_\theta$ be a frozen reasoning model and let $(\bx,\br,\ba,y)\sim\mathcal{P}$, where
$y\in\{0,1\}$ indicates whether the final answer $\ba$ is truthful under the task criterion.
Let $\bh := \bh_L(\bx\oplus\br)\in\mathbb{R}^d$ be the trace-boundary representation.

\paragraph{Latent intervention and agreement label.}
Let $\boldsymbol{\delta}\sim\mathcal{D}$ and define the counterfactual answer
$
\tilde{\ba}(\boldsymbol{\delta}) := \mathrm{Decode}_\theta(\bx\oplus\br;\bh+\boldsymbol{\delta}).
$
Let $\mathrm{Agr}(\cdot,\cdot)\in\{0,1\}$ be the answer-agreement indicator (exact match or LRM judge), and define
\begin{equation}
\label{eq:app_agree_label}
A := \mathrm{Agr}\big(\ba,\tilde{\ba}(\boldsymbol{\delta})\big)\in\{0,1\}.
\end{equation}
By Definition~\ref{def:alpha}, the stability score is
$\alpha := \Pr_{\boldsymbol{\delta}\sim\mathcal{D}}[A=1]\in[0,1]$.

\paragraph{Shaped embeddings and similarity scores.}
Let $\bu\in\mathbb{R}^d$ be the vanilla answer embedding for $(\bx\oplus\br,\ba)$ and let
$\tilde{\bu}(\boldsymbol{\delta})$ be the vanilla embedding for $(\bx\oplus\br,\tilde{\ba}(\boldsymbol{\delta}))$.
Given a trained shaping head $g_\phi:\mathbb{R}^d\to\mathbb{R}^k$, define
\begin{equation}
\label{eq:app_z_def}
\bz := g_\phi(\bu),\qquad
\tilde{\bz}(\boldsymbol{\delta}) := g_\phi(\tilde{\bu}(\boldsymbol{\delta})).
\end{equation}
Let $S(\boldsymbol{\delta}) := \mathrm{sim}\!\big(\bz,\tilde{\bz}(\boldsymbol{\delta})\big)$ be cosine similarity.

\paragraph{Conditional positive/negative score distributions.}
Conditioned on the anchor $\bz$, define
\begin{equation}
\label{eq:app_score_posneg}
S^{+} \sim S(\boldsymbol{\delta}) \mid (A=1,\bz),
\qquad
S^{-} \sim S(\boldsymbol{\delta}) \mid (A=0,\bz),
\end{equation}
where $S^{+},S^{-}$ are independent draws given $\bz$.

\subsection{Agreement separation}
\label{app:theory:agree}

Recall the agreement-separation indicator (Eq.~\ref{eq:sep_def}):
$\mathsf{Sep}(\bz,\tilde{\bz}^+,\tilde{\bz}^-)=\mathbf{1}\{S^+\ge S^-\}$.

\begin{definition}[Conditional and marginal agreement separation]
\label{def:eta_cond}
For a given anchor $\bz$, define
\begin{equation}
\label{eq:eta_cond}
\eta_\phi(\bz) := \Pr\!\big[S^{+}\ge S^{-}\mid \bz\big]\in[0,1].
\end{equation}
The marginal agreement-separation probability is
\begin{equation}
\label{eq:eta_marginal}
\eta_\phi := \mathbb{E}\big[\eta_\phi(\bz)\big]
= \Pr\!\big[\mathsf{Sep}(\bz,\tilde{\bz}^+,\tilde{\bz}^-)=1\big].
\end{equation}
\end{definition}

\paragraph{Connection to the shaping objective.}
The objective in Eq.~\ref{eq:infoNCE} increases the relative similarity between an anchor $\bz$ and an agreeing embedding
$\tilde{\bz}^+$ versus disagreeing embeddings $\tilde{\bz}^-$, and thus promotes larger $\eta_\phi$.

\subsection{Agreement separation implies stability is predictable from \texorpdfstring{$\bz$}{z}}
\label{app:theory:alpha_from_z}

The next lemma is a technical bridge for analysis. It shows that if agreement separation is high, then the stability score
$\alpha$ is well-approximated by a function of the single anchor embedding $\bz$ (in expectation).

\begin{lemma}[Existence of a stability surrogate from $\bz$]
\label{lem:alpha_surrogate}
There exists a measurable function $r:\mathbb{R}^k\to[0,1]$ such that
\begin{equation}
\label{eq:alpha_surrogate_bound}
\mathbb{E}\big[|r(\bz)-\alpha|\big] \;\le\; 1-\eta_\phi.
\end{equation}
\end{lemma}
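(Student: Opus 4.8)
\textbf{Proof proposal for Lemma~\ref{lem:alpha_surrogate}.}
The plan is to exhibit an explicit surrogate $r(\bz)$ and bound its expected deviation from the true stability score $\alpha$ using only the agreement-separation quantity $\eta_\phi$. The natural candidate is a thresholded score comparison: intuitively, $\alpha = \Pr_{\boldsymbol\delta}[A=1]$ is the mass of perturbations that preserve the answer, and high agreement separation means the shaped similarity $S(\boldsymbol\delta)$ reliably ranks agreeing counterfactuals above disagreeing ones, so a similarity cutoff on $\bz$ should recover that mass. Concretely, I would define $r(\bz) := \Pr_{\boldsymbol\delta}\!\big[S(\boldsymbol\delta)\ge c(\bz)\mid\bz\big]$ for a suitably chosen anchor-dependent threshold $c(\bz)$ (e.g. an order statistic of the conditional score distribution matching the empirical agreement fraction), or more simply $r(\bz) := \Pr[\hat A(\bz,\boldsymbol\delta)=1\mid\bz]$ where $\hat A$ is the Bayes-optimal predictor of $A$ from the pair $(\bz,\tilde\bz(\boldsymbol\delta))$. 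Either way $r$ is measurable and $[0,1]$-valued by construction.

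The key computation is the deviation bound. I would write $|r(\bz)-\alpha| = |\mathbb E_{\boldsymbol\delta}[\mathbf 1\{\hat A=1\}-A\mid\bz]| \le \mathbb E_{\boldsymbol\delta}[\mathbf 1\{\hat A\neq A\}\mid\bz] = \Pr[\hat A\neq A\mid\bz]$, i.e. the conditional classification error of predicting the agreement label from the shaped embeddings. Then I would relate this error probability to $1-\eta_\phi(\bz)$: an ordering error on a random $(S^+,S^-)$ pair, which has probability $1-\eta_\phi(\bz)$, corresponds (via a standard AUC/mislabeling argument) to the event that any threshold-based classifier misclassifies at least one of the two points, so the minimum-error classifier on $(A=1)$ versus $(A=0)$ has error at most $1-\eta_\phi(\bz)$ in the relevant averaged sense. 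Taking expectation over $\bz$ and applying Definition~\ref{def:eta_cond} gives $\mathbb E[|r(\bz)-\alpha|]\le \mathbb E[1-\eta_\phi(\bz)] = 1-\eta_\phi$, which is Eq.~\ref{eq:alpha_surrogate_bound}.

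The main obstacle is making the step ``ordering error $\Rightarrow$ label error'' precise with the correct constant. The quantity $\eta_\phi(\bz)=\Pr[S^+\ge S^-\mid\bz]$ is an AUC-type functional comparing two independent conditional distributions, whereas $|r(\bz)-\alpha|$ compares a predictor against the label on a single draw $\boldsymbol\delta$ whose conditional-on-$\bz$ mixture weights are $(\alpha,1-\alpha)$. The cleanest route is: (i) note that for the optimal threshold $c(\bz)$, the classifier error decomposes as $\alpha\cdot\Pr[S^+<c(\bz)] + (1-\alpha)\cdot\Pr[S^-\ge c(\bz)]$; (ii) observe that for \emph{any} threshold, $\Pr[S^+<c]+\Pr[S^-\ge c] \le \Pr[S^+\ge S^-] + \Pr[S^+ < S^-]\cdot(\text{correction})$ — more carefully, by choosing $c(\bz)$ as a median of $S^-$ or via the identity $\Pr[S^+\ge S^-] = \mathbb E_{c\sim S^-}\Pr[S^+\ge c\mid\bz]$, one gets that \emph{some} threshold achieves sum-of-errors $\le 2(1-\eta_\phi(\bz))$; and (iii) since $\min(\alpha,1-\alpha)\le 1/2$, the convex-combination error is $\le (1-\eta_\phi(\bz))$. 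I would present (ii) as the crux and absorb any factor-of-two into either the statement (the bound $1-\eta_\phi$ may need to be $2(1-\eta_\phi)$, or one defines $r$ via the two-sample comparison rather than a fixed threshold to get the clean constant) or into the constant $C$ in Proposition~\ref{prop:agree_bound_main}, which only requires proportionality. I would keep the argument at the level of these inequalities rather than grinding through the distribution-specific integrals.
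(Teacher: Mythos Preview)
Your overall architecture matches the paper exactly: define $r(\bz):=\Pr[\hat A=1\mid\bz]$ for some agreement predictor $\hat A$, use $|r(\bz)-\alpha|=|\mathbb E_\delta[\hat A-A\mid\bz]|\le \Pr[\hat A\neq A\mid\bz]$, and then bound the conditional classification error by $1-\eta_\phi(\bz)$. The only substantive gap is in the last step, where your threshold-based argument does not deliver the stated constant. Step~(iii) is wrong as written: from $a+b\le 2(1-\eta_\phi(\bz))$ you cannot conclude $\alpha a+(1-\alpha)b\le 1-\eta_\phi(\bz)$ merely because $\min(\alpha,1-\alpha)\le 1/2$; take $a=2(1-\eta)$, $b=0$, $\alpha=1$. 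So the fixed-threshold route really does cost you a factor of~$2$.

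The paper gets the clean constant by the ``two-sample comparison'' you allude to but do not carry out, and the trick is worth knowing: it uses a \emph{randomized} predictor that is allowed to see $A$. Given the draw $S=S(\boldsymbol\delta)$ with label $A$, sample an independent reference $R$ from the \emph{opposite} conditional ($R\sim S^-\!\mid\bz$ if $A=1$, and $R\sim S^+\!\mid\bz$ if $A=0$) and set $\hat A=\mathbf 1\{S\ge R\}$. Then $\Pr[\hat A\neq A\mid A=1,\bz]=\Pr[S^+<S^-\mid\bz]=1-\eta_\phi(\bz)$ and $\Pr[\hat A\neq A\mid A=0,\bz]=\Pr[S^-\ge S^+\mid\bz]\le 1-\eta_\phi(\bz)$, so the conditional error is at most $1-\eta_\phi(\bz)$ with no factor of~$2$. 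Using $A$ inside $\hat A$ is legitimate because the lemma only asserts \emph{existence} of $r$; once you set $r(\bz)=\Pr[\hat A=1\mid\bz]$ the $A$-dependence is integrated out and $r$ is a bona fide function of $\bz$. Your Bayes-optimal predictor would also work (its error is no larger), but this explicit construction is what makes the constant transparent.
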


\begin{proof}
Fix an anchor $\bz$.
Define a randomized agreement predictor $\hat A$ for a counterfactual score $S=S(\boldsymbol{\delta})$ as follows.
Independently sample a reference score $R$ from the \emph{opposite} conditional distribution given $\bz$:
\[
R \sim
\begin{cases}
S^- \mid \bz, & \text{if } A=1,\\
S^+ \mid \bz, & \text{if } A=0,
\end{cases}
\]
and output $\hat A := \mathbf{1}\{S\ge R\}$.
This construction is used only to relate agreement separation to an achievable agreement-inference error.

Condition on $\bz$. If $A=1$, then $S\sim S^+\mid\bz$ and $R\sim S^-\mid\bz$ independently, and the error event is $\{S<R\}$:
\[
\Pr(\hat A\neq A \mid A=1,\bz)=\Pr(S^+<S^-\mid\bz)=1-\eta_\phi(\bz).
\]
If $A=0$, then $S\sim S^-\mid\bz$ and $R\sim S^+\mid\bz$ independently, and the error event is $\{S\ge R\}$:
\[
\Pr(\hat A\neq A \mid A=0,\bz)=\Pr(S^-\ge S^+\mid\bz)\le 1-\Pr(S^+>S^-\mid\bz)\le 1-\eta_\phi(\bz).
\]
Therefore, for all $\bz$,
\begin{equation}
\label{eq:agree_err_cond}
\Pr(\hat A\neq A\mid\bz)\le 1-\eta_\phi(\bz),
\qquad\text{and hence}\qquad
\Pr(\hat A\neq A)\le 1-\eta_\phi.
\end{equation}

Now define $r(\bz):=\Pr_{\boldsymbol{\delta}\sim\mathcal{D}}[\hat A=1\mid \bz]$.
For a fixed example $(\bx,\br,\ba)$, let $p:=\Pr_{\delta}(A=1)$ and $\hat p:=\Pr_{\delta}(\hat A=1)$.
By the standard coupling inequality for Bernoulli variables,
\[
|\hat p-p| = |\mathbb{E}_\delta[\hat A-A]|
\le \mathbb{E}_\delta[|\hat A-A|]
= \Pr_\delta(\hat A\neq A).
\]
Taking expectation over $(\bx,\br,\ba,y)\sim\mathcal{P}$ yields
$\mathbb{E}[|r(\bz)-\alpha|]\le \Pr(\hat A\neq A)\le 1-\eta_\phi$, proving~\eqref{eq:alpha_surrogate_bound}.
\end{proof}

\subsection{From stability to hallucination detection with supervised probing}
\label{app:theory:detector}

Define the oracle benchmark using the true stability score:
\begin{equation}
\label{eq:ealpha_def}
e_\alpha := \inf_T \Pr\!\left(\mathbf{1}\{\alpha\ge T\}\neq y\right).
\end{equation}

\begin{assumption}[Threshold regularity of $\alpha$]
\label{ass:margin}
Let $T^\star\in\arg\min_T \Pr(\mathbf{1}\{\alpha\ge T\}\neq y)$ be an optimal threshold.
There exists $B>0$ such that for all $\epsilon\in[0,1]$,
\begin{equation}
\label{eq:alpha_margin}
\Pr\!\left(|\alpha-T^\star|\le \epsilon\right)\le B\epsilon .
\end{equation}
\end{assumption}

\begin{lemma}[Plug-in thresholding with an approximate stability surrogate]
\label{lem:plugin_margin}
Under Assumption~\ref{ass:margin}, for any score $\tilde\alpha\in[0,1]$,
the detector $\hat y=\mathbf{1}\{\tilde\alpha\ge T^\star\}$ satisfies
\begin{equation}
\label{eq:plugin_margin_bound}
\Pr(\hat y\neq y)\;\le\; e_\alpha + B\,\mathbb{E}\big[|\tilde\alpha-\alpha|\big].
\end{equation}
\end{lemma}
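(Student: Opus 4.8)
\noindent\emph{Proof idea.}\quad
The plan is to benchmark the proposed detector against the \emph{oracle} threshold rule $y^\star:=\mathbf{1}\{\alpha\ge T^\star\}$ and to charge their disagreement to the anti-concentration of $\alpha$ near $T^\star$. By the definition of $e_\alpha$ in \eqref{eq:ealpha_def} and the choice of $T^\star$ we have $\Pr(y^\star\neq y)=e_\alpha$, so a union bound gives
\[
\Pr(\hat y\neq y)\;\le\;\Pr(\hat y\neq y^\star)+\Pr(y^\star\neq y)\;=\;\Pr(\hat y\neq y^\star)+e_\alpha,
\]
and it remains to show $\Pr(\hat y\neq y^\star)\le B\,\mathbb{E}\big[|\tilde\alpha-\alpha|\big]$.

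Next I would record a purely geometric containment. If $\hat y\neq y^\star$ then $\tilde\alpha$ and $\alpha$ lie on opposite sides of $T^\star$, so $T^\star$ is sandwiched between them. Concretely, if $\alpha\ge T^\star>\tilde\alpha$ (which includes the boundary case $\alpha=T^\star$) then $0\le\alpha-T^\star\le\alpha-\tilde\alpha=|\tilde\alpha-\alpha|$, and symmetrically if $\tilde\alpha\ge T^\star>\alpha$ then $0\le T^\star-\alpha\le\tilde\alpha-\alpha=|\tilde\alpha-\alpha|$; in either case $|\alpha-T^\star|\le|\tilde\alpha-\alpha|$. Hence
\[
\{\hat y\neq y^\star\}\;\subseteq\;\bigl\{|\alpha-T^\star|\le|\tilde\alpha-\alpha|\bigr\},
\]
and the problem reduces to bounding $\Pr\bigl(|\alpha-T^\star|\le|\tilde\alpha-\alpha|\bigr)$.

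Finally I would invoke Assumption~\ref{ass:margin}. Since $|\tilde\alpha-\alpha|\in[0,1]$, the idea is to peel off the bound level by level in the slack: for each fixed $\epsilon\in[0,1]$ Assumption~\ref{ass:margin} gives $\Pr(|\alpha-T^\star|\le\epsilon)\le B\epsilon$; substituting $\epsilon=|\tilde\alpha-\alpha|$ and averaging (a layer-cake / Fubini step over the slack) yields $\Pr\bigl(|\alpha-T^\star|\le|\tilde\alpha-\alpha|\bigr)\le B\,\mathbb{E}\big[|\tilde\alpha-\alpha|\big]$. Combining with the first display establishes \eqref{eq:plugin_margin_bound}.

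The step I expect to be the main obstacle is the last one: Assumption~\ref{ass:margin} controls a \emph{deterministic} slack, whereas $|\tilde\alpha-\alpha|$ is data-dependent and may be correlated with $\alpha$, so ``conditioning on $|\tilde\alpha-\alpha|=\epsilon$'' is only legitimate if the conditional law of $\alpha$ given the slack still obeys the same anti-concentration bound. A robust fallback that needs no assumption on the joint law is to split at a free threshold $t$, $\Pr\bigl(|\alpha-T^\star|\le|\tilde\alpha-\alpha|\bigr)\le\Pr(|\alpha-T^\star|\le t)+\Pr(|\tilde\alpha-\alpha|>t)\le Bt+\mathbb{E}[|\tilde\alpha-\alpha|]/t$, then optimize over $t$, which yields the weaker $2\sqrt{B\,\mathbb{E}[|\tilde\alpha-\alpha|]}$-type bound; the clean linear form of \eqref{eq:plugin_margin_bound} is recovered precisely when $\tilde\alpha$ is a faithful estimate of $\alpha$ rather than an adversarially coupled one, which is the regime the surrogate of Lemma~\ref{lem:alpha_surrogate} is built to ensure.
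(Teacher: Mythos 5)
Your proof follows the paper's argument essentially step for step: benchmark $\hat y$ against the oracle rule $y^\star=\mathbf{1}\{\alpha\ge T^\star\}$, use the definition of $e_\alpha$ to bound $\Pr(y^\star\neq y)$, observe the geometric containment $\{\hat y\neq y^\star\}\subseteq\{|\alpha-T^\star|\le|\tilde\alpha-\alpha|\}$, and then invoke Assumption~\ref{ass:margin} after ``conditioning on $|\tilde\alpha-\alpha|$.'' The paper's proof of Lemma~\ref{lem:plugin_margin} is exactly this argument, stated more tersely.

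The concern you raise at the end is a genuine one, and it applies equally to the paper's own proof. Assumption~\ref{ass:margin} controls the \emph{marginal} anti-concentration $\Pr(|\alpha-T^\star|\le\epsilon)$ for each fixed $\epsilon$, but the step ``condition on $|\tilde\alpha-\alpha|$ and apply Assumption~\ref{ass:margin}'' quietly uses the \emph{conditional} bound $\Pr(|\alpha-T^\star|\le\epsilon\mid |\tilde\alpha-\alpha|=\epsilon)\le B\epsilon$, which does not follow from the stated assumption since $\alpha$ and the slack $|\tilde\alpha-\alpha|$ are in general dependent (and indeed, in the use case of Proposition~\ref{prop:agree_bound_formal}, $\tilde\alpha=r(\bz)$ is a deterministic function of the same sample that determines $\alpha$). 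As you note, the unconditionally correct version of the argument peels at a free threshold $t$ and optimizes, yielding $\Pr(\hat y\neq y^\star)\le 2\sqrt{B\,\mathbb{E}[|\tilde\alpha-\alpha|]}$ rather than the linear $B\,\mathbb{E}[|\tilde\alpha-\alpha|]$. To make the linear form rigorous one should strengthen Assumption~\ref{ass:margin} to a conditional (or density-with-respect-to-the-joint-law) version, or impose a decoupling hypothesis on $(\alpha,\tilde\alpha)$. Identifying this subtlety is the only substantive point where your write-up goes beyond the paper; the rest is the same route.
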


\begin{proof}
Let $\hat y^\star=\mathbf{1}\{\alpha\ge T^\star\}$.
Then
\[
\Pr(\hat y\neq y)
\le \Pr(\hat y^\star\neq y) + \Pr(\hat y\neq \hat y^\star)
= e_\alpha + \Pr\!\left(\mathbf{1}\{\tilde\alpha\ge T^\star\}\neq \mathbf{1}\{\alpha\ge T^\star\}\right).
\]
The disagreement event implies $|\alpha-T^\star|\le |\tilde\alpha-\alpha|$.
Condition on $|\tilde\alpha-\alpha|$ and apply Assumption~\ref{ass:margin} to obtain
$\Pr(\hat y\neq \hat y^\star)\le B\,\mathbb{E}[|\tilde\alpha-\alpha|]$.
\end{proof}

\subsection{Proof of Proposition~\ref{prop:agree_bound_main}}
\label{app:theory:proof_prop}

To relate the existence of the thresholding rule $\mathbf{1}\{r(\bz)\ge T^\star\}$ to a \emph{supervised probe},
an explicit approximation term is included below.

\begin{assumption}[Probe approximation]
\label{ass:probe}
Let $\mathcal{H}$ be the hypothesis class used for supervised probing (e.g., linear classifiers or a small MLP).
There exists $h\in\mathcal{H}$ such that
\begin{equation}
\label{eq:probe_eps}
\Pr\!\left(h(\bz)\neq \mathbf{1}\{r(\bz)\ge T^\star\}\right)\le \epsilon_{\mathrm{probe}}.
\end{equation}
\end{assumption}

\begin{proposition}[Agreement separation $\Rightarrow$ bounded error of supervised probing]
\label{prop:agree_bound_formal}
Assume Assumptions~\ref{ass:margin}--\ref{ass:probe}. Let $\eta_\phi$ be defined in Definition~\ref{def:eta_cond}.
There exists a supervised probe $h$ on shaped embeddings $\bz$ such that
\begin{equation}
\label{eq:agree_bound_formal}
\Pr(h(\bz)\neq y) \;\le\; e_\alpha \;+\; B\,(1-\eta_\phi)\;+\;\epsilon_{\mathrm{probe}}.
\end{equation}
\end{proposition}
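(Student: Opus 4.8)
The plan is to chain together the three lemmas already established, which is exactly why the proposition was stated after Lemma~\ref{lem:alpha_surrogate} and Lemma~\ref{lem:plugin_margin}. First I would instantiate Lemma~\ref{lem:alpha_surrogate} to obtain the measurable stability surrogate $r:\mathbb{R}^k\to[0,1]$ satisfying $\mathbb{E}[|r(\bz)-\alpha|]\le 1-\eta_\phi$; this is the step that converts the agreement-separation guarantee promoted by the shaping loss into a statement that $\alpha$ is recoverable from the single anchor embedding $\bz$.

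Next I would apply Lemma~\ref{lem:plugin_margin} with the choice $\tilde\alpha := r(\bz)$. Since $r(\bz)\in[0,1]$, the lemma is applicable and gives that the plug-in detector $\hat y := \mathbf{1}\{r(\bz)\ge T^\star\}$ satisfies
\begin{equation}
\Pr(\hat y\neq y)\;\le\; e_\alpha + B\,\mathbb{E}\big[|r(\bz)-\alpha|\big]\;\le\; e_\alpha + B\,(1-\eta_\phi),
\end{equation}
where the second inequality substitutes the bound from Lemma~\ref{lem:alpha_surrogate}. This already yields the desired conclusion for the idealized thresholding rule $\mathbf{1}\{r(\bz)\ge T^\star\}$; the only remaining gap is that this rule is not literally a member of the probe hypothesis class $\mathcal{H}$.

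Finally I would invoke Assumption~\ref{ass:probe}: there is $h\in\mathcal{H}$ with $\Pr(h(\bz)\neq \mathbf{1}\{r(\bz)\ge T^\star\})\le \epsilon_{\mathrm{probe}}$. A union-bound / triangle-inequality step on $0/1$ loss gives
\begin{equation}
\Pr(h(\bz)\neq y)\;\le\; \Pr(\hat y\neq y) + \Pr(h(\bz)\neq \hat y)\;\le\; e_\alpha + B(1-\eta_\phi) + \epsilon_{\mathrm{probe}},
\end{equation}
which is precisely \eqref{eq:agree_bound_formal}. The informal version (Proposition~\ref{prop:agree_bound_main}) then follows by setting $C := B$ and absorbing $\epsilon_{\mathrm{probe}}$ into the probe's approximation error (or noting it is $O(1)$ and can be made small with a sufficiently expressive $\mathcal{H}$).

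The steps are all short once the lemmas are in hand, so the "main obstacle" is really upstream: it is Lemma~\ref{lem:alpha_surrogate}, whose proof requires the coupling construction of the randomized agreement predictor $\hat A$ from the opposite conditional score distribution and the Bernoulli coupling inequality $|\hat p - p|\le \Pr_\delta(\hat A\neq A)$. Within the proof of the proposition itself, the only subtlety to be careful about is that Lemma~\ref{lem:plugin_margin} is stated for an \emph{arbitrary} score $\tilde\alpha\in[0,1]$ and at the \emph{fixed} optimal threshold $T^\star$, so I must make sure the surrogate $r$ is used with this same $T^\star$ (not a re-optimized threshold), and that Assumption~\ref{ass:margin}'s margin condition is genuinely about $\alpha$ at $T^\star$ — both of which are consistent with how the pieces are set up, so no additional work is needed.
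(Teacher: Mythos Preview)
Your proposal is correct and follows essentially the same three-step argument as the paper: invoke Lemma~\ref{lem:alpha_surrogate} to get the surrogate $r$, plug $\tilde\alpha=r(\bz)$ into Lemma~\ref{lem:plugin_margin} to bound the thresholding rule's error by $e_\alpha+B(1-\eta_\phi)$, and then use Assumption~\ref{ass:probe} with a union bound to pass to $h\in\mathcal{H}$ at cost $\epsilon_{\mathrm{probe}}$. Your added remarks about using the fixed $T^\star$ and the upstream difficulty living in Lemma~\ref{lem:alpha_surrogate} are accurate and consistent with the paper's setup.
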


\begin{proof}
By Lemma~\ref{lem:alpha_surrogate}, there exists $r:\mathbb{R}^k\to[0,1]$ such that
$\mathbb{E}[|r(\bz)-\alpha|]\le 1-\eta_\phi$.
Apply Lemma~\ref{lem:plugin_margin} with $\tilde\alpha=r(\bz)$:
the classifier $\bar y=\mathbf{1}\{r(\bz)\ge T^\star\}$ satisfies
\[
\Pr(\bar y\neq y)\le e_\alpha + B\,\mathbb{E}[|r(\bz)-\alpha|]
\le e_\alpha + B\,(1-\eta_\phi).
\]
Finally, by Assumption~\ref{ass:probe} and a union bound,
\[
\Pr(h(\bz)\neq y)\le \Pr(\bar y\neq y)+\Pr(h(\bz)\neq \bar y)
\le e_\alpha + B\,(1-\eta_\phi)+\epsilon_{\mathrm{probe}}.
\]
\end{proof}

\paragraph{Discussion.}
The bound decomposes hallucination detection error into:
(i) $e_\alpha$, an oracle benchmark reflecting how predictive stability $\alpha$ is of truthfulness on the task;
(ii) a label-free term $(1-\eta_\phi)$ directly promoted by the shaping objective in Eq.~\ref{eq:infoNCE};
and (iii) a probe approximation term $\epsilon_{\mathrm{probe}}$ that depends on the chosen probing class.
This aligns with the method: \model shapes embeddings to increase agreement separation, and the downstream detector is a
supervised probe trained directly on $(\bz,y)$.

\end{document}